\documentclass[sigconf]{acmart}

\copyrightyear{2026}
\acmYear{2026}
\setcopyright{cc}
\setcctype{by}
\acmConference[KDD '26]{Proceedings of the 32nd ACM SIGKDD Conference on Knowledge Discovery and Data Mining V.2}{August 09--13, 2026}{Jeju Island, Republic of Korea}
\acmBooktitle{Proceedings of the 32nd ACM SIGKDD Conference on Knowledge Discovery and Data Mining V.2 (KDD '26), August 09--13, 2026, Jeju Island, Republic of Korea}
\acmDOI{10.1145/3770855.3818925}
\acmISBN{979-8-4007-2259-2/2026/08}

\newcommand{\ba}{\boldsymbol{a}}
\newcommand{\PP}{\mathbb{P}}

\usepackage{subcaption}
\usepackage{thm-restate}
\usepackage{algorithm,algorithmic}
\usepackage{dsfont}
\usepackage{amsmath}
\usepackage{mathtools}
\usepackage{amsthm}
\usepackage{multirow, enumitem}
\usepackage{booktabs} 
\usepackage[table]{xcolor} 

\theoremstyle{remark}
\newtheorem{theorem}{Theorem}

\newtheorem{lemma}[theorem]{Lemma}

\theoremstyle{definition}

\theoremstyle{remark}

\usepackage{placeins}

\begin{document}

\title{Deep Doubly Debiased Longitudinal Effect Estimation with ICE G-Computation}

\author{Wenxin Chen}
\email{wc645@cornell.edu}
\affiliation{%
  \institution{Computer Science, Cornell University}
  \city{New York City}
  \state{New York}
  \country{USA}}

\author{Weishen Pan}
\email{wep4001@med.cornell.edu}
\affiliation{%
  \institution{Weill Cornell Medicine, Cornell University}
  \city{New York City}
  \state{New York}
  \country{USA}}

\author{Kyra Gan}
\email{kyragan@cornell.edu}
\affiliation{%
  \institution{Operations Research and Information Engineering, Cornell University}
  \city{New York City}
  \state{New York}
  \country{USA}}

\author{Fei Wang}
\email{few2001@med.cornell.edu}
\affiliation{%
  \institution{Weill Cornell Medicine, Cornell University}
  \city{New York City}
  \state{New York}
  \country{USA}}


\begin{abstract}
  Estimating longitudinal treatment effects is essential for sequential decision-making but is challenging due to treatment–confounder feedback. While \emph{Iterative Conditional Expectation (ICE) G-computation} offers a principled approach, its recursive structure suffers from error propagation, corrupting the learned outcome regression models. We propose \textbf{$\mathbf{D^3}$-Net}, a framework that mitigates error propagation in ICE training and then applies a robust final correction. First, to interrupt error propagation \emph{during learning}, we train the ICE sequence using \emph{Sequential Doubly Robust} (SDR) pseudo-outcomes, which provide bias-corrected targets for each regression. Second, we employ a multi-task transformer with a \emph{covariate simulator head} for auxiliary supervision, regularizing representation learning, and a \emph{target network} to stabilize training dynamics. For the final estimate, we discard the SDR correction and instead use the uncorrected nuisance models to perform \emph{Longitudinal Targeted Minimum Loss-Based Estimation} (LTMLE) on the original outcomes. This second-stage, targeted debiasing ensures robustness and optimal finite-sample properties. Comprehensive experiments demonstrate that our model, $D^3$-Net, robustly reduces bias and variance across different horizons, counterfactuals, and time-varying confoundings, compared to existing state-of-the-art ICE-based estimators.
\end{abstract}

\begin{CCSXML}
<ccs2012>
   <concept>
       <concept_id>10010147.10010257.10010293.10010294</concept_id>
       <concept_desc>Computing methodologies~Neural networks</concept_desc>
       <concept_significance>500</concept_significance>
       </concept>
   <concept>
       <concept_id>10010147.10010178.10010187.10010192</concept_id>
       <concept_desc>Computing methodologies~Causal reasoning and diagnostics</concept_desc>
       <concept_significance>500</concept_significance>
       </concept>
   <concept>
       <concept_id>10010405.10010444.10010449</concept_id>
       <concept_desc>Applied computing~Health informatics</concept_desc>
       <concept_significance>300</concept_significance>
       </concept>
 </ccs2012>
\end{CCSXML}

\ccsdesc[500]{Computing methodologies~Neural networks}
\ccsdesc[500]{Computing methodologies~Causal reasoning and diagnostics}
\ccsdesc[300]{Applied computing~Health informatics}

\keywords{longitudinal effect estimation, iterative conditional expectation, sequential doubly robust, targeted minimum loss-based estimation}


\maketitle
\section{Introduction}
Clinical decision-making is often sequential, with treatments adapted over time based on a patient's evolving status.
Estimating the effect of a \emph{sequence} of treatments is therefore crucial for informing optimal care.
However, this longitudinal problem is fundamentally challenging due to the treatment-confounder feedback:
time-varying confounders are affected by past treatments, and in turn affect subsequent treatment assignment and outcomes. This dynamic dependence complicates both causal identification and estimation. 

Existing studies approach this problem from two directions: treatment-model-based and outcome-model-based. \emph{Treatment-model-based} methods, such as IPTW 
\cite{rosenbaum1983central, robins2000marginal, hernan2006estimating,seedat2022continuous, melnychuk2022causal}, aim to model treatment assignment to create a pseudo-population balanced on confounders. These methods often suffer from progressively increasing variance due to the multiplicative accumulation of propensity scores. \emph{Outcome-model-based} methods, instead, try to directly model the dynamics of the outcome. 
Among these methods, \emph{Iterative Conditional Expectation (ICE)} G-computation has emerged as a prevalent approach. By breaking the problem into a series of conditional mean outcomes backward in time, ICE G-computation provides a stable alternative
\cite{robins1986new,keil2014parametric,tran2019double,petersen2014targeted}.

However, its practical efficacy hinges on accurately modeling complex, high-dimensional relationships between time-varying confounders, treatments, and outcomes—a task where traditional parametric models often fail. This limitation has driven the adoption of deep learning (DL), whose flexible function approximators (e.g., in DeepACE~\cite{frauen2023estimating}, GST-UNet~\cite{oprescu2025gst}, IGC-Net~\cite{hess2024igc}) are well-suited to these complexities. Recent advances have productively merged this flexibility with the robustness of \emph{doubly robust estimation}, as seen in Deep LTMLE~\cite{shirakawa2024longitudinal}, to hedge against model misspecification.

This DL implementation introduces a critical new challenge: autoregressive error propagation within a shared representation. Unlike traditional ICE, which fits independent regressions at each step, deep ICE trains one model whose predictions are recursively fed back as targets. Small errors in early steps compound into large biases for later ones, corrupting the training signal. This forces the model's shared feature representation to adapt to increasingly erroneous pseudo-outcomes, a form of representation drift that undermines the stability of ICE and the robustness guarantees of DR estimation in finite samples.

In this work, we address deep ICE-specific instabilities through a novel
\textbf{doubly debiased framework}, \textbf{$D^3$-Net}. The key idea is
an SDR-based objective that mitigates error propagation during
recursive nuisance estimation. We further incorporate architectural stabilization through a multi-task transformer with auxiliary supervision and a
target network, and introduce a final LTMLE targeting step for
robust inference. Our contributions are fourfold:

\begin{itemize}[itemsep=0pt, topsep=2pt, partopsep=0pt, parsep=0pt, leftmargin=*]
\item \textbf{A bias-corrected ICE G-computation objective} that incorporates SDR pseudo-outcomes to reduce error propagation during the recursive training of outcome regressions. We show that this procedure provably mitigates error propagation in Lemma~\ref{lemma:first_vs_second_order}, thereby benefiting downstream inference.
\item \textbf{A novel multi-task transformer architecture} for longitudinal data that jointly learns treatment, outcome, and covariate simulation models. It employs a target network to smooth the moving targets and a covariate simulator for auxiliary supervision, together ensuring stable and accurate nuisance estimation.
\item \textbf{A two-stage debiasing estimation procedure} that leverages the improved nuisances for LTMLE, yielding final estimates that are both robust and statistically efficient.
\item \textbf{Comprehensive empirical validation} demonstrating that our framework, $D^3$-Net, robustly achieves lower bias and variance than existing state-of-the-art longitudinal effect estimators. \footnote{Codes are publicly available at \url{https://github.com/Wenxin-Elmon-Chen/D3N}}
\end{itemize} 

\section{Related work}
Our work relates to longitudinal causal inference and off-policy evaluation (OPE) in reinforcement learning (RL), which share a core methodological challenge: estimating the value of a policy or treatment regime from observational data. Both confront analogous trade-offs among bias, variance, and stability when adjusting for time-varying confounding, yet have largely evolved in parallel.

\vspace{3pt}
\textbf{\emph{Longitudinal Causal Estimation.}}\; Methods for estimating longitudinal effects are traditionally categorized by their reliance on treatment, outcome, or both models.
Classic \textit{treatment-model–based methods} use \emph{Inverse Probability of Treatment Weighting} (IPTW) \cite{rosenbaum1983central, rosenbaum1984reducing,robins2000marginal, hernan2006estimating, cole2008constructing}, its stabilized variant \cite{xu2010use, chesnaye2022introduction}, or matching \cite{austin2011introduction, stuart2010matching} to create balanced pseudo-populations. 
Recent DL methods adopt this principle and learn latent representations that predict outcomes while achieving a balance between treatment arms. Examples include RMSN~\cite{lim2018forecasting}, CRN~\cite{bica2020estimating}, TE-CED~\cite{seedat2022continuous}, and Causal-Transformer~\cite{melnychuk2022causal}.
These algorithms often perform poorly with finite samples in long horizons due to the compounding of small errors in the treatment models, leading to high variance \cite{bica2020estimating, petersen2012diagnosing, cole2008constructing}.

The \textit{outcome-model–based methods} paradigm directly models the outcome-generating process to estimate treatment effects using the \emph{G-computation} \cite{robins1986new,snowden2011implementation} by standardizing over the natural course of confounder distributions. Forward G-computation~\cite{taubman2009intervening, li2021g,chiu2023evaluating, mcgrath2020gformula,rein2024deep} simulates the time-varying confounder and outcomes based on the counterfactual treatments. While they often yield smaller variance~\cite{frauen2024model}, the need for a high-quality simulator is challenging in the finite-sample setting of healthcare. 
Recently, ICE G-computation gained attention due to the improved stability
from its recursive formulation. It decomposes the problem into a sequence of conditional expectation estimations (i.e., step-wise $Q$-function estimation), thereby avoiding extensive Monte Carlo rollouts and reducing the accumulation of simulation errors in predicting time-varying covariates across long horizons~\cite{frauen2023estimating, oprescu2025gst, liu2025bayesian,frauen2024model,hess2024igc}. 

The \emph{doubly robust methods}, combining treatment and outcome models, enhance robustness against model misspecification, improving finite-sample performance in high-dimensional, long-horizon settings. These include DML~\cite{lewis2021double,bodory2022evaluating}, LTMLE \cite{van2011targeted,stitelman2012general,rosenblum2010targeted,diaz2023nonparametric,shirakawa2024longitudinal, frauen2023estimating,guo2024estimating,frauen2024model} 
and SDR~\cite{luedtke2017sequential,diaz2023nonparametric}. We use recent SDR advances to augment ICE G-computation, then apply
LTMLE for final estimation.

Off-Policy Evaluation in RL
shares a similar statistical structure with longitudinal effect estimation.
Hence, OPE approaches mirror the standard categories in longitudinal causal estimation 
:
1) \emph{importance sampling} \cite{mahmood2014weighted, thomas2016data, hanna2019importance, schlegel2019importance} corresponds to IPTW and suffers from the high variance~\cite{liu2018breaking}.
2) \textit{Direct methods} (analogous to G-computation) simulate counterfactual trajectories
\cite{le2019batch,duan2020minimax,voloshin2021empiricalstudyoffpolicypolicy}. 3) \textit{DR methods} improve finite-sample efficiency and retain consistency when either component is correctly specified~\cite{bibaut2019more,pmlr-v48-jiang16,pmlr-v48-thomasa16,JMLR:v21:19-827,DBLP:journals/corr/abs-1103-4601}. 

\vspace{3pt}
\textbf{\emph{DL for Sequential Estimation.}}\;
Within the ICE G computation family, recent DL approaches exploit parameter sharing over time by using a single autoregressive model (e.g., LSTMs or transformers) 
to estimate $Q_t$~\cite{frauen2023estimating,oprescu2025gst,shirakawa2024longitudinal}. This design amortizes representation learning across steps while retaining 
the step-wise definition for $Q_t$.
However, operating within the recursive structure in ICE, this sharing induces a `moving-target' phenomenon that is well known in deep RL~\cite{mnih2013playing,fu2019diagnosing,lillicrap2015continuous}. 
A standard remedy is the \emph{target network}, a slowly updated copy of the major network, used to generate regression targets.
Additionally, as the horizon grows, early-step targets in ICE G-computation can become increasingly noisy due to compounding approximation errors, motivating auxiliary supervision to regularize representations and improve learning from weak or noisy signals~\cite{jaderberg2016reinforcement,fang2023predictive,stooke2021decoupling}.

Our work is situated at the intersection of these fields. We focus on the DL implementation of the theoretically sound ICE G-computation framework. 
Inspired by stabilization techniques from deep RL, we employ a target network to smooth the recursive learning process and use auxiliary supervision to regularize representation learning. Furthermore, we uniquely combine the SDR transformation (to reduce error propagation in nuisance models) with a final LTMLE step (to ensure statistical robustness), creating a novel two-stage debiasing procedure tailored for stable and efficient deep longitudinal estimation.

\section{Problem Setup and Preliminary}
We consider $n$ i.i.d. longitudinal samples, each observed over $\tau$ discrete time steps and generated under an unknown treatment regime. For each time point $t \in \{1,\ldots,\tau\}$, we observe time-varying covariates $L_t$, followed by a binary treatment assignment $A_t \in \mathcal{A}=\{0,1\}$. A terminal outcome $Y$ is observed after the final time point $\tau$. Accordingly, the observed data for a single unit can be written as
$O = (L_1, A_1, L_2, A_2, \ldots, L_\tau, A_\tau, Y) \sim P_0,$
where $P_0$ is the unknown true data-generating distribution. Any intermediate outcome observed after $A_t$ can be absorbed into $L_{t+1}$ without loss of generality.

For notational convenience, we use subscripts to represent temporal sequences, e.g., $X_{t:\tau} = (X_t, X_{t+1}, \ldots, X_\tau)$. The information available just before assigning treatment at time $t$ is summarized by the history
$
H_t = \{L_i, A_i\}_{i=1}^{t-1} \cup \{L_t\}\in \mathcal{H}_t.
$
Both covariates and treatments may depend on the entire preceding history: $L_t$ can be influenced by $H_{t-1} \cup \{A_{t-1}\}$, while treatment assignment at time $t$ follows an unknown mechanism characterized by
\begin{equation}
    G_t(H_t) = P(A_t = 1 \mid H_t),\label{eq:def_G}
\end{equation}
where $G_t : \mathcal{H}_t \to (0,1)$ denotes the propensity of receiving treatment conditional on the observed history.




\paragraph{\textbf{Problem Setup}}
We aim to estimate 
the potential outcome of a counterfactual treatment sequence, such that decisions can be informed by comparing the potential outcomes of different sequences.
Let $\ba = (a_1, \dots, a_{\tau})$ denote a generic counterfactual treatment sequence,
and $Y(\ba)$ be the corresponding potential outcome.
The \textbf{conditional average potential outcome (CAPO)} for sequence $\ba$ given baseline covariates $L_1$ is defined as
$\psi = \mathbb{E}[Y(\ba)|L_1].$


Throughout, we assume the standard longitudinal causal inference conditions to ensure the identifiability of our target parameters~\cite{frauen2023estimating}:
(1) Consistency: $Y(\ba) = Y$ when $A_{1:\tau}=\ba$;
(2) Sequential Ignorability: $Y(\ba) \perp A_t \mid H_t, \, \forall t \in \{1, \dots, \tau\}$;
(3) Positivity: $\forall t \in \{1, \dots, \tau\}, \, P(A_t=a_t|H_t)>0$.

The rest of the section reviews
three key methodological building blocks. We first describe ICE G‑computation (Sec. \ref{subsec:ice_g_comp}), which constructs plug‑in estimates of longitudinal treatment effects. We then briefly review semi-parametric efficiency theory (Sec. ~\ref{subsec:semi-parametric}). Grounded on the theory, we review LTMLE (Sec.~\ref{subsec:tmle}) and SDR estimator (Sec.~\ref{subsec:sdr}). They are both debiasing procedures that correct the first‑order plug‑in bias of initial estimators, including those from ICE, but differ in when the debiasing happens.

\subsection{ICE G-computation}\label{subsec:ice_g_comp} 
CAPO can be identified by the G-computation formula, which is a nested sequence of conditional expectations, evaluated under the counterfactual sequence $\ba$ \cite{robins2008estimation, bang2005doubly}:
\begin{align}
\psi = \mathbb{E} \Big[ \mathbb{E}\Big[ \dots
\mathbb{E} [Y | A_\tau = a_\tau, H_\tau] \dots ]| A_2 = a_2, H_2 \Big]| A_1 = a_1, L_1 \Big].
\end{align}


ICE G-computation estimates this quantity by breaking down the nested expectation into a series of recursive regression problems, proceeding backward from $t=\tau$ to $t=1$. Define $Q_{\tau+1} = Y$. For each step $t = \tau, \dots, 1$, ICE G-computation recursively defines 
\begin{equation}
    Q_t(A_t,H_t) = \mathbb{E}[Q_{t+1}(a_{t+1},H_{t+1})|A_t,H_t] \label{eq:temporal_equality}.
\end{equation}
The CAPO is then identified as
$\psi = \mathbb{E}[Q_1(a_1, H_{1})].$

Eq.~\eqref{eq:temporal_equality} implies that $Q_t$ can be learned by regressing on the conditional expectation of $Q_{t+1}(a_{t+1},H_{t+1})$, in the same spirit as temporal-difference learning in RL~\cite{sutton1998reinforcement}. In practice, the expectation is not computed explicitly. Instead, the realized value $Q_{t+1}(a_{t+1},H_{t+1})$ is directly used as the regression target for $Q_t$~\cite{shirakawa2024longitudinal,frauen2023estimating,oprescu2025gst}. This can be viewed as a one-sample Monte Carlo estimate of the conditional expectation. While these targets are noisy, minimizing the regression loss with such targets at the population level recovers the desired conditional expectation.

Putting these together, ICE G-computation is implemented by iterating the following two steps from $t = \tau$ to $1$:
\begin{enumerate}[itemsep=0pt, topsep=2pt, partopsep=0pt, parsep=0pt, leftmargin=*]
     \item Generate the regression target for step $t$ by computing the realized value at step $t+1$ under counterfactual treatment, $\hat{Q}_{t+1}(a_{t+1}, h_{t+1})$, a one-sample Monte Carlo simulation for the expectation in Eq.~\eqref{eq:temporal_equality}. (At $t=\tau$, this step is skipped because the target is $\hat{Q}_{\tau+1} = Y$ by construction.)
    \item Regress $Q_t(A_t,H_t)$ on the target $\hat{Q}_{t+1}(a_{t+1}, h_{t+1})$. 
\end{enumerate}
The estimate of $\psi$ is obtained by averaging $\hat{Q}_1(a_1, h_1)$ over samples. 


While ICE G‑computation provides a plug‑in estimator that is consistent under correct model specification, its theoretical guarantees become difficult to maintain when using highly flexible function approximators such as transformers, which may overfit or introduce regularization bias. This limitation motivates doubly robust alternatives that explicitly correct estimation bias.

\subsection{Semi-parametric Efficiency} \label{subsec:semi-parametric}

Unlike ICE G-computation, which relies solely on outcome regression, doubly robust methods explicitly leverage the treatment model to correct bias so that the estimator is consistent even when one of the nuisance models (outcome or treatment) is misspecified. Here, we review the semi-parametric efficiency theory before introducing the DR methods, including LTMLE and SDR.

Let the target parameter $\psi = \Psi(P)$ be a functional of the data-generating distribution $P$.
In practice, $\Psi(P)$ depends on nuisance components -- in our case, the propensity score and outcome model at each time step.
A plug-in estimator $\Psi(\hat P)$ is obtained by substituting estimates of these nuisances.
The error of this estimator can be expanded via a von Mises expansion \cite{van2011targeted, cho2024kernel}:
\begin{align}
    & \hat\psi_n - \psi_0 =\PP_n D_\Psi^*(P_0) - \underbrace{\PP_n D_\Psi^* (\hat P_n)}_{\text{plug-in bias}} + \notag \\ 
    &(\PP_n -P_0)[D_\Psi^*(\hat P_n)-D_\Psi^*(P_0)]+R_2(\hat P_n, P_0),\label{eq:von_mises}
\end{align}
where $D_{\Psi}^*(P)$ is the influence function (IF) of $\psi$ 
for the target parameter $\psi$ under the nonparametric model for the data distribution $P$. Meanwhile, $\mathbb P_n$, $P_0$ are the empirical and true expectation operators, respectively;  
At a high level, all DR methods, despite their different approaches, aim to achieve asymptotic efficiency by eliminating the \emph{plug-in bias term}, while relying on the remaining two terms to be $o_p(n^{-1/2})$ under regularity conditions~\cite{van2011targeted,chernozhukov2018double}.

\subsection{LTMLE}\label{subsec:tmle}
For our target estimand CAPO, its IF depends on the entire sequence of nuisance, $\{Q_t,G_t\}_{t=1}^\tau$ (defined in Eqs.~\eqref{eq:def_G} and \eqref{eq:temporal_equality}): 
\begin{equation}
    D_\Psi^*(P)(O) 
    = \sum_{s=1}^\tau\Bigg\{ \left(\prod_{k=1}^s w_k\right)
    \left[Q_{s+1}(a_{s+1},H_{s+1}) - Q_s(A_s,H_s)\right]\Bigg\},\label{eq:capo_eif}
\end{equation}
where $w_t$ is the inverse propensity score at time $k$
\begin{equation}
    w_k=\frac{\mathds{1}(A_k=a_{k})}{ a_{k}\cdot G_k(H_k) + (1-a_{k})(1-G_k(H_k))}.
\end{equation}
Guided by the IF, LTMLE eliminates the plug-in bias term by iteratively updating the initial estimates of $\{Q_t\}_{t=1}^\tau$, so as to solve the score equation $\mathbb{P}_n D_\Psi^*(\hat{P}_n) = 0$. We refer readers to prior work \citep{van2011targeted, cho2024kernel,li2025targeted, shirakawa2024longitudinal} for a detailed description of this procedure and Appendix~\ref{appx:ltmle} for our implementations.

Let $Q_t^0$ and $G^{0}_t$ denote the true outcome and propensity score model, and $\hat{Q}_t$, $\hat{G}_t$ their estimates. After the debiasing step, LTMLE provides the following multiply robust consistency:

\begin{lemma}[$\tau+1$ multiply robust consistency of LTMLE \cite{diaz2023nonparametric}]
    Assume that there is a time $k$ such that$ \|\hat{Q}_t - Q_t^0\| = o_p(1)$ for all $t>k$ and $\|\hat{G}_t - G^{0}_t\| = o_p(1)$ for all $t\leq k$. Then we have $\hat{\psi}^d = \psi^d + o_p(1)$.\label{lemma:ltmle_multiply_robust}
\end{lemma}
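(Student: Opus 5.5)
The plan is to use the exact (mean-zero) structure of the influence function in Eq.~\eqref{eq:capo_eif} together with the fact that the LTMLE estimate solves the empirical score equation. Write $\hat\psi^d=\PP_n \hat Q_1^*(a_1,H_1)$, where $\hat Q_t^*$ are the targeted outcome regressions. By construction they satisfy $\PP_n D_\Psi^*(\hat Q^*,\hat G)=0$. Hence
\begin{equation*}
\hat\psi^d-\psi^d=\PP_n\Big[\hat Q_1^*(a_1,H_1)+D_\Psi^*(\hat Q^*,\hat G)(O)\Big]-\psi^d .
\end{equation*}
I would then replace $\PP_n$ by $P_0$. The difference is an empirical-process term, which is $o_p(1)$ by a uniform law of large numbers (Glivenko--Cantelli, or sample splitting) over the class of bounded nuisance estimates. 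This uses positivity to keep the weights $w_k$ bounded. It therefore suffices to show
\begin{equation*}
P_0\big[\hat Q_1^*+D_\Psi^*(\hat Q^*,\hat G)\big]-\psi^d\to 0 .
\end{equation*}

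The key step is an exact telescoping identity for this population quantity at arbitrary nuisances $(\tilde Q,\tilde G)$. Write $\tilde\pi_s=\prod_{k\le s}\tilde w_k$ and $\pi^0_s$ for the true cumulative weights. Conditioning on $(A_s,H_s)$ in each summand and using the recursion~\eqref{eq:temporal_equality} gives the following.
\begin{equation*}
P_0\big[\tilde Q_1+D^*(\tilde Q,\tilde G)\big]-\psi^d=\sum_{s=1}^{\tau}P_0\Big[(\tilde\pi_{s-1}-\pi^0_{s-1})\big(\tilde Q_s(a_s,H_s)-Q^0_s(a_s,H_s)\big)\Big]
\end{equation*}
Equivalently, I may use the form due to Luedtke et al.\ and D\'iaz et al., in which the sum instead involves $(\tilde\pi_s-\pi^0_s)$ multiplied by the error of the one-step regression $\tilde Q_s-\mathbb E[\tilde Q_{s+1}\mid A_s,H_s]$. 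I will derive this identity by backward induction on $s$, adding and subtracting $\pi^0_s\tilde Q_{s+1}$.

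For multiple robustness, I will not use the product form but a split at time $k$. For $s\le k$, the weights $\tilde\pi_s$ involve only $\hat G_1,\dots,\hat G_s$, which are consistent. So $\tilde\pi_s-\pi^0_s=o_p(1)$ in $L_2$ by boundedness and a telescoping product bound, and these terms vanish by Cauchy--Schwarz, since the $Q$-errors are bounded. For $s>k$, I will use the fact that the targeting fluctuation at steps $t>k$ is asymptotically negligible. The fluctuation parameters $\epsilon_t\to 0$ because the initial $\hat Q_t$ are consistent there. It follows that $\hat Q_t^*$ is consistent for $t>k$, and those terms vanish as well.

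The main obstacle is that later steps' terms still carry inconsistent weights $\hat\pi_s$ for $s>k$. This happens because the product includes $\hat G_t$ with $t>k$, which need not converge. The mismatch must therefore be absorbed by the consistent $Q$-errors alone. This is where boundedness of $\hat\pi_s$ (positivity for $\hat G$, or truncation) is essential. A second subtlety is that the pseudo-outcome regressions for $s\le k$ regress an inconsistent target. I will handle this by organizing the identity so that the $s\le k$ terms pair an arbitrary bounded $Q$-difference with a vanishing weight difference, and the $s>k$ terms pair an arbitrary bounded weight with a vanishing $Q$-difference.
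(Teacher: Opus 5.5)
The paper gives no proof of this lemma; it is imported from D\'iaz et al. Your argument therefore has to stand on its own. The outline is right: solve the score equation, control an empirical-process term, use an exact population remainder, and split at $k$. The identity you call the key step, however, is false.

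Write $\Delta_s:=\tilde Q_s(a_s,H_s)-Q^0_s(a_s,H_s)$ and $\tilde g_s:=a_s\tilde G_s(H_s)+(1-a_s)(1-\tilde G_s(H_s))$, with $g^0_s$ defined the same way.
\begin{itemize}
\item At $\tau=1$ your right-hand side vanishes identically, since $\tilde\pi_0=\pi^0_0=1$. The left-hand side is the familiar AIPW remainder $P_0[(1-g^0_1/\tilde g_1)\Delta_1]$, which is not zero in general.
\item More generally, your formula never involves $\tilde G_\tau$. It would therefore assert that a misspecified last-period propensity can never bias the estimator, so no backward induction will produce it.
\end{itemize}
The correct identity comes from splitting $\tilde Q_{s+1}-\tilde Q_s$ as $\Delta_{s+1}-\Delta_s$ plus the true increment, which has conditional mean zero given $(A_s,H_s)$, and then summing by parts:
\begin{align*}
P_0\big[\tilde Q_1+D^*_\Psi(\tilde Q,\tilde G)\big]-\psi^d
&=\sum_{s=1}^{\tau}P_0\Big[\big\{(\tilde\pi_{s-1}-\pi^0_{s-1})-(\tilde\pi_s-\pi^0_s)\big\}\Delta_s\Big]\\
&=\sum_{s=1}^{\tau}P_0\Big[\tilde\pi_{s-1}\Big(1-\frac{g^0_s}{\tilde g_s}\Big)\Delta_s\Big].
\end{align*}
So you dropped the second half of the bracket. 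The Luedtke--D\'iaz form you offer as an alternative is correct and equals this expression. It is not, however, equivalent to your displayed formula, so you should use it or the corrected display instead.

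With the correction, your split goes through, and more simply than you planned. The $s$-th term involves only $\tilde G_1,\dots,\tilde G_s$ and $\Delta_s$.
\begin{itemize}
\item For $s\le k$ it vanishes because $\tilde g_s-g^0_s\to0$, with $\tilde\pi_{s-1}$ and $\Delta_s$ merely bounded. You do not need $\tilde\pi_{s-1}$ to be consistent.
\item For $s>k$ it vanishes because $\Delta_s\to0$, with the weights merely bounded.
\end{itemize}

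Three points still need to be made explicit.
\begin{itemize}
\item \textbf{Counterfactual evaluation.} The hypothesis controls $\hat Q_s-Q^0_s$ in $L_2(P_0)$ at the observed $A_s$, but $\Delta_s$ is evaluated at $a_s$. Positivity of $G^0_s$ bridges this at the cost of a factor $\delta^{-1/2}$.
\item \textbf{Why $\epsilon_t\to0$ for $t>k$.} Consistency of the initial fit is not enough by itself. The limiting fluctuation score at $\epsilon_t=0$ is $P_0[\hat\pi_t(Q^0_{t+1}(a_{t+1},H_{t+1})-Q^0_t(A_t,H_t))]$. By the tower property it is zero for any bounded $(A_t,H_t)$-measurable weight, which is exactly why the inconsistent $\hat G_t$ for $t>k$ are harmless. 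You also need strict monotonicity of the score in $\epsilon_t$, which requires $Q^0_t$ to stay away from $\{0,1\}$ with positive probability.
\item \textbf{Penalized fluctuation.} The paper's LTMLE uses an $\ell_1$-penalized fluctuation, which solves the score equation only up to $\lambda$. Your first step therefore needs $\lambda\to0$.
\end{itemize}
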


This result implies that consistency is achieved as long as all outcome regressions after some time point $k$ are consistent and all propensity score models at and before $k$ are consistent. For the $\tau$-step-ahead CAPO, this indicates $\tau+1$ chances to obtain a consistent estimate. This condition is substantially weaker than those required by pure IPTW or G-computation, which require consistency of all propensity scores or all outcome regressions, respectively.

This robustness structure reflects when LTMLE applies debiasing. 
In particular, LTMLE first estimates the entire sequence of outcome regressions $\{Q_t\}_{t=1}^\tau$ via ICE G-computation, and then applies the IF-guided correction at the end. Consequently, the success of the debiasing step depends on the quality of the entire initial estimate of outcome models. Because ICE G-computation proceeds recursively backward in time, a poor estimate at some time $k$ necessarily contaminates all downstream regressions $Q_t$ for $t<k$, limiting the effectiveness of post-hoc debiasing. 

\begin{algorithm}[t]
\caption{SDR~\cite{diaz2023nonparametric}}
\begin{algorithmic}[1]
    \STATE Input: Dataset $\{O_i\}_{i=1}^n$; Initial estimates of $G_t$ for $t = 1, \dots, \tau$
    \STATE Initialize $Q^\dag_{\tau + 1} = Y$
    \FOR{$t =\tau $ to $1$}
        \STATE (Learn) Use observed data to regress $Q_t(A_t, H_t)$ on $Q^\dag_{t+1}(a_{t+1}, H_{t+1})$
        \STATE (Predict) Generate the regression target for $Q_{t-1}(A_{t-1}, H_{t-1})$:
                \begin{align}
                    &Q^\dag_t(\dot{a}_t, h_t) = Q_t(a_t, h_t) +\notag\\
                    &\quad\quad\sum_{s=t}^\tau\left\{ \left(\prod_{k=t}^s w_k\right)
                    \left[Q_{s+1}(a_{s+1},H_{s+1}) - Q_s(A_s,H_s)\right]\right\}.\label{eq:sdr_target}
                \end{align}
    \ENDFOR
    \STATE Final estimate $\hat{\psi}_n = \mathbb{P}_nQ^\dag_1(a_1, H_1)$
\end{algorithmic}\label{algo:sdr}
\end{algorithm}

\subsection{SDR}\label{subsec:sdr}
The post-hoc debiasing strategy of LTMLE, while effective, is still not optimal. 
Recent developments in semi-parametric theory show that stronger robustness guarantees can be achieved.
In particular, SDR estimators (see Algorithm~\ref{algo:sdr}) achieve consistency under substantially weaker conditions by requiring that, at each time step, \emph{either} the outcome regression or the propensity score model be consistently estimated, leading to $2^\tau$ chances to achieve consistency:

\begin{lemma}[$2^\tau$ multiply robust consistency~\cite{diaz2023nonparametric}]
    Assume that, for each time $t$, either $ \|\hat{Q}_t - Q_t^\dagger\| = o_p(1)$ or $\|\hat{G}_t - G_t^0\| = o_p(1)$. Then we have $\hat{\psi}^d = \psi^d + o_p(1)$.
\end{lemma}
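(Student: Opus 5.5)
The plan is a backward induction on $t$ from $\tau$ down to $1$. The key object is the sequential-regression target $Q^\dagger_t$: the population regression of the SDR pseudo-outcome $\hat Q^\dagger_{t+1}(a_{t+1},H_{t+1})$ on $(A_t,H_t)$, computed with the \emph{estimated} downstream nuisances. At $t=1$, the estimate $\hat\psi=\mathbb{P}_n \hat Q^\dagger_1(a_1,H_1)$ should differ from $\psi$ by an empirical-process term plus a remainder. We will show that this remainder vanishes under the stated either/or condition.

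The core is an exact second-order (telescoping) decomposition of the pseudo-outcome bias. Define the conditional bias
\[
B_t(H_t)=\mathbb{E}\big[\hat Q^\dagger_t(a_t,H_t)\mid H_t, A_{1:t-1}=a_{1:t-1}\big]-Q^0_t(a_t,H_t).
\]
Conditioning on $H_t$ and using Eq.~\eqref{eq:sdr_target}, the leading term of the correction is $w_t[\hat Q^\dagger_{t+1}-\hat Q_t]$. Taking the expectation over $A_t$ given $H_t$ introduces the factor $G^0_t/\hat G_t$. The conditional mean of $\hat Q^\dagger_{t+1}$ given $(A_t=a_t,H_t)$ is $Q^\dagger_t$ by definition. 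This yields the recursion
\[
B_t=\Big(1-\tfrac{G^0_t}{\hat G_t}\Big)\big(\hat Q_t-Q^\dagger_t\big)+\big(Q^\dagger_t-Q^0_t\big),
\]
written for $a_t=1$, with the analogous form when $a_t=0$. Moreover, $Q^\dagger_t-Q^0_t$ is precisely $\mathbb{E}[B_{t+1}\mid A_t=a_t,H_t]$. Unrolling gives
\[
B_1=\sum_t \mathbb{E}\Big[\big(1-\tfrac{G^0_t}{\hat G_t}\big)\big(\hat Q_t-Q^\dagger_t\big)\;\Big|\; \cdot\Big],
\]
a sum of products of propensity error and outcome-regression error, each reweighted along $a_{1:t-1}$.

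Next we bound each summand using Cauchy--Schwarz together with positivity. We will strengthen positivity to a strong version in which $\hat G_t$ and $G^0_t$ are bounded away from $0$ and $1$, so that the products of weights $\prod_k w_k$ and the ratios $G^0/\hat G$ stay bounded. Under the hypothesis, at every $t$ one of the two factors in the summand is $o_p(1)$ in $L_2$, and the other factor is $O_p(1)$ by the boundedness of $Y$ and the weights. Each summand is therefore $o_p(1)$, and there are finitely many ($\tau$) of them.

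Finally, we handle the sampling term $(\mathbb{P}_n-P_0)\hat Q^\dagger_1$. We will control it either by cross-fitting, so that conditional on the training fold it is a mean of bounded i.i.d. variables and hence $O_p(n^{-1/2})$, or by assuming a Glivenko--Cantelli class. Together these give $\hat\psi=\psi+o_p(1)$.

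The main obstacle is the decomposition step. The regression target $Q^\dagger_t$ depends on the estimated future nuisances, so consistency must be stated relative to $Q^\dagger_t$ rather than $Q^0_t$; this is exactly why the lemma uses $Q^\dagger_t$. We must therefore verify carefully that the correction terms for $s>t$ in Eq.~\eqref{eq:sdr_target} telescope into $\hat Q^\dagger_{t+1}$, so that the recursion closes without leaving cross terms. We will first check this algebraic identity, namely $\hat Q^\dagger_t=\hat Q_t+w_t(\hat Q^\dagger_{t+1}-\hat Q_t)$, directly from the definition of the sum.
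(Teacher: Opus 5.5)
Your proposal is correct and follows essentially the route the paper relies on. The paper does not prove this lemma itself; it cites Díaz et al., whose argument is your telescoping second-order identity for the SDR pseudo-outcome bias (their Lemma 3, which the paper also invokes as \eqref{eq:sdr_bias_key} in its appendix), followed by term-by-term Cauchy--Schwarz under positivity and a cross-fitting bound on $(\mathbb{P}_n-P_0)\hat Q^\dagger_1$. The extra conditions you state (estimated propensities bounded away from $0$ and $1$, bounded $\hat Q_t$, fold-wise independent nuisances) and your reading of $Q^\dagger_t$ as the population regression of $\hat Q^\dagger_{t+1}$ on $(A_t,H_t)$ are exactly what the paper's informal statement leaves implicit.
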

The fundamental difference between SDR and ICE G-computation lies in the regression target. Rather than regressing $Q_t$ on the plug-in pseudo-outcome $\hat{Q}_{t+1}(a_{t+1}, H_{t+1})$, SDR regresses on a debiased target, $Q^\dagger_{t+1}(a_{t+1}, H_{t+1})$, that augments the plug-in estimate with an IF-based correction, as shown in Eq.~\eqref{eq:sdr_target} in Alg.~\ref{algo:sdr}. This leads to a crucial difference in the required consistency conditions: LTMLE relies on consistent estimation of the true model $Q^0_t$ (Lemma~\ref{lemma:ltmle_multiply_robust}), whereas SDR only requires consistency between $\hat{Q_t}$ and its constructed regression target $Q^\dagger_{t+1}$. As a result, SDR shifts the burden from accurately recovering the true outcome model to accurately fitting a regression, yielding a weaker requirement in finite samples.

To see the intention behind SDR, note that the augmentation term in
Eq.~\eqref{eq:sdr_target} shares the same functional form as the IF in Eq.~\eqref{eq:capo_eif}. This is not a coincidence: both are derived from solving the score equation for CAPO, differing only in the time point at which this statistical estimation problem is anchored. ICE G-computation identifies the final $\tau$-step-ahead CAPO, $\mathbb{E}[Q_1(a_1, H_1)]$, by recursively modeling outcomes from the final time point backward. In contrast, the SDR transformation treats each intermediate quantity $\mathbb{E}[Q_t(a_t, H_t)]$ as a $(\tau - t)$-step-ahead CAPO in its own right. Consequently, the augmentation term at step $t$ is precisely the IF for that shorter-horizon estimand, thereby embedding local double robustness into the sequential regression updates.
\begin{figure}[t]
    \centering
    \includegraphics[width=\linewidth]{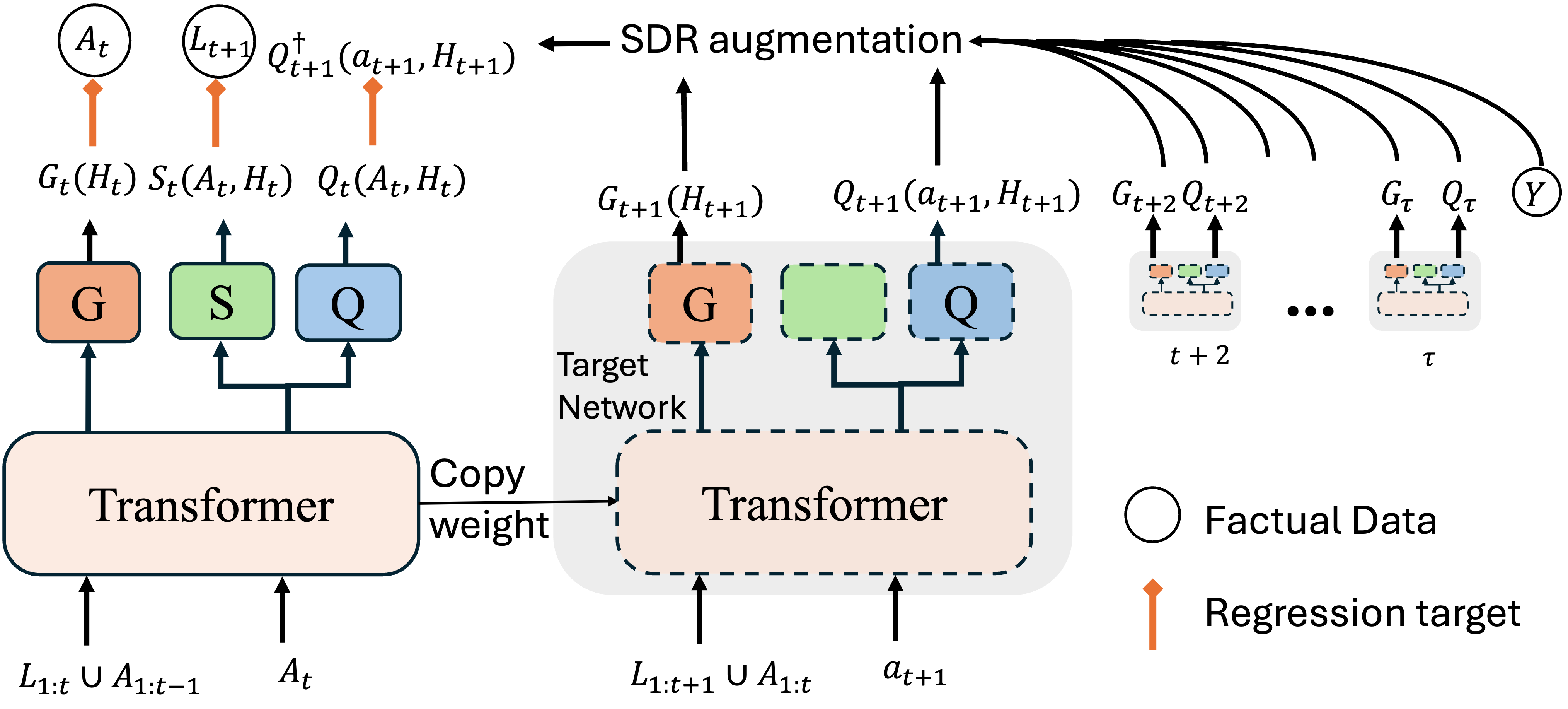}
    \caption{Overview of the $D^3$-Net architecture and training procedure.
$D^3$-Net uses a shared multi-task transformer backbone with the outcome ($Q$), treatment ($G$), and simulator ($S$) heads. At each time step $t$, the $Q$-head is trained using SDR-augmented targets, $Q_{t+1}^\dagger(a_{t+1}, H_{t+1})$, which combines future outcome and treatment models. A target network, implemented as a delayed copy of the main network, is used to generate stable SDR targets for recursive learning.}
    \label{fig:d3-net}
    \vspace{-\baselineskip}
\end{figure}


Formally, at time $t$,
the influence function for the $(\tau - t)$-step-ahead CAPO is given by
\begin{equation*}
    D_\Psi^*(P)(O) 
    = \sum_{s=t}^\tau\left\{ \left(\prod_{k=t}^s w_k\right)
    \left[Q_{s+1}(a_{s+1},H_{s+1}) - Q_s(A_s,H_s)\right]\right\},
\end{equation*}
matching the augmentation term in
Eq.~\eqref{eq:sdr_target}. 
This construction is directly analogous to AIPTW for static CATE estimation, where debiasing is achieved by 
an additive IF correction~\cite{scharfstein1999adjusting,robins1994estimation}.

Although both LTMLE and SDR are multiply robust and rooted in the same semi-parametric efficiency theory, the key distinction between them lies in \textbf{when and how debiasing is applied}. LTMLE first fits the entire sequence of outcome regressions using ICE G-computation and then performs a \emph{single, final debiasing step}, enforcing the score equation only for the $\tau$-step-ahead CAPO. As a result, estimation errors may accumulate during the backward recursion before they are corrected.

In contrast, SDR \emph{debiases recursively at every time step}. It treats each intermediate regression $Q_t$ as a shorter-horizon CAPO estimand and enforces its corresponding score equation immediately. This stepwise fitting and debiasing prevents upstream errors from propagating downstream, yielding \emph{sequential double robustness} throughout the estimation process.
However, because its debiasing procedure directly incorporates a product of inverse propensity scores in the objective function at each step, it is susceptible to extreme weights and high variance in finite samples, particularly when propensity estimates are near the boundaries.

LTMLE does not exhibit this unstable behavior. Its debiasing is performed through a fluctuation of the outcome model within a parametric submodel, which ensures that the updated estimate naturally respects the support of the outcome and avoids the explosive variance associated with propensity score products.
This tension, between SDR’s recursive protection against error propagation and its potential for variance inflation, motivates our hybrid approach.

\section{Methods}

Our method consists of two stages: training and inference. 
During training, we use an SDR-augmented objective to embed debiasing into outcome regression and stabilize optimization with a target network (Section~\ref{subsec:sdr_gcomp}); we additionally regularize representation learning through auxiliary supervisions based on time-varying covariate prediction, and present the overall model architecture in Section~\ref{subsec:architecture}. 
In the inference stage, we discard the SDR debiasing term and instead apply a final LTMLE targeting step, combining the strengths of both SDR and LTMLE; the resulting inference procedure and overall algorithm are described in Section~\ref{subsec:reverse_sdr}.

\begin{algorithm}[t]
    \caption{$D^3$-Net}\label{algo:deepsdr}
    \begin{algorithmic}[1]
        \STATE \underline{\textbf{Training:}}
        \STATE \textbf{Input:} Data $\{O_i\}_{i=1}^n$; horizon $\tau$; target network update rate $\beta$.
        \STATE Initialize $\theta$; set target copy $\theta' \gets \theta$.
        \FOR{batch $=1,\dots,B$}
            \STATE \textbf{(Major model forward)} For $t=1,\dots,\tau$:
            \STATE $Q_t(A_t,H_t) \gets h_Q(\phi(H_t,A_t;\theta_\phi);\theta_Q)$
            \STATE $G_{t}(H_t) \gets h_{G}(\phi(H_t;\theta_\phi);\theta_{G})$
            \STATE $S_{t}(A_t,H_t) \gets h_{S}(\phi(H_t,A_t;\theta_\phi);\theta_{S})$
        
            \STATE \textbf{(Target network forward for SDR)} For $t=1,\dots,\tau$:
            \STATE $Q'_t(A_t,H_t) \gets h_Q(\phi(H_t,A_t;\theta'_\phi);\theta'_Q)$
            \STATE $Q'_t(a_t,H_t) \gets h_Q(\phi(H_t,a_t;\theta'_\phi);\theta'_Q)$
            \STATE $G'_{t}(H_t) \gets h_{G}(\phi(H_t;\theta'_\phi);\theta'_{G})$
        
            \STATE $Q^\dagger_{\tau+1} \gets Y$
            \FOR{$t=\tau,\dots,1$}
                \STATE Compute $Q^\dagger_{t+1}(a_{t+1},H_{t+1})$ via SDR transformation (Eq.~\eqref{eq:sdr_target}) using $\{Q'_s(\cdot),G'_{s}(\cdot)\}_{s=t+1}^{\tau}$.
            \ENDFOR
        
            \STATE $\mathcal{L}_Q \gets \sum_{t=1}^{\tau}\big(Q_t(A_t,H_t)-Q^\dagger_{t+1}(a_{t+1},H_{t+1})\big)^2$
            \STATE $\mathcal{L}_G \gets \sum_{t=1}^{\tau}\,L_{\mathrm{bce}}\!\left(A_{t},\,G_{t}(H_t)\right)$
            \STATE $\mathcal{L}_S \gets \sum_{t=1}^{\tau}\,(S_{t}(A_t,H_t)-L_{t+1})^2$
            \STATE $\mathcal{L}\gets \mathcal{L}_Q + \alpha(\mathcal{L}_G+\mathcal{L}_S)$
        
            \STATE $\theta \gets \theta - \eta\nabla_\theta \mathcal{L}$
            \STATE $\theta' \gets \beta\theta + (1-\beta)\theta'$
        \ENDFOR
        \STATE \textbf{Output:} parameters $\theta=(\theta_\phi,\theta_Q,\theta_G,\theta_S)$.
        \STATE
        \STATE \underline{\textbf{Inference:}}
        \STATE \textbf{Input:} Evaluation Data $\{O_i\}_{i=1}^m$.
        \FOR{$t = 1, \dots, \tau$}
            \STATE $G_{t}(H_t) = h_{G}(\phi(H_t;\theta_\phi);\theta)$
            \STATE $Q_t(A_t,H_t) = h_Q(\phi(H_t,A_t;\theta_\phi);\theta_Q)$
        \ENDFOR
        \STATE \textbf{Output:} $\hat{\psi} = LTMLE(\{O_i\}_{i=1}^m;\{Q_t\}_{t=1}^\tau; \{G_{t}\}_{t=1}^\tau)$
    \end{algorithmic}
    \vspace{-2pt}
\end{algorithm}

\subsection{Debias During Training via SDR}\label{subsec:sdr_gcomp}
We first describe the training objective that integrates SDR-style debiasing into outcome regression.
At a high level, we parameterize a sequence of outcome and treatment models $\{Q_t,G_t\}_{t=1}^\tau$, where $Q_t$ are trained with supervision defined in SDR and $G_t$ are supervised by factual treatment assignments.
Concretely, for each time step $t$, we construct a debiased pseudo-outcome $\hat{Q}^\dagger_{t+1}(a_{t+1}, H_{t+1})$ following Eq.~\eqref{eq:sdr_target}. 
The $Q$-function is then trained by minimizing
$\mathcal{L}_Q = \sum_{t=1}^\tau \mathbb{E}\left[(Q_t(A_t,H_t) - \hat{Q}^\dag_{t+1}(a_{t+1},H_{t+1}))^2\right].$

Analogous to the discussion of ICE (Section~\ref{subsec:ice_g_comp}), this objective treats the realized value of $\hat{Q}^\dag_{t+1}(a_{t+1},H_{t+1})$ as a one sample Monte Carlo simulation for $\mathbb{E}[\hat{Q}^\dag_{t+1}(a_{t+1},H_{t+1})|A_t,H_t]$. Minimizing this objective at the population level recovers the SDR-defined supervision: $\hat{Q}_{t}(A_{t},H_{t})=\mathbb{E}[\hat{Q}^\dag_{t+1}(a_{t+1},H_{t+1})|A_t,H_t]$. Meanwhile, the $G$-function is trained by minimizing
$\sum_{t=1}^\tau L_\mathrm{bce}(A_t,G_t(H_t)),$
where $L_\mathrm{bce}(x,\hat x)= x\log (\hat x) + (1-x)\log(1 -\hat x)$.

 

We parametrize $\{Q_t\}_{t=1}^\tau$ using a single, shared neural network. 
This allows the model to exploit the temporal structure in data, improving sample efficiency. However, in the recursive training of ICE or SDR, this shared parameterization introduces a \textbf{moving-target}: at each iteration, $Q_t$ is trained to regress on a pseudo-outcome $\hat{Q}_{t+1}^\dag$ that is itself produced by the same, changing parameters. After each gradient step, these shared parameters are updated, causing the regression target to shift, which leads to unstable training dynamics. The SDR augmentation compounds this issue, as its debiasing term involves a product of propensity scores that can introduce additional variance and amplify target fluctuations.

To stabilize training, we employ two techniques. First, we incorporate a \emph{target network}, a standard heuristic from deep reinforcement learning. We maintain a slowly updated copy of the parameters via Polyak averaging (line 22, Algorithm~\ref{algo:deepsdr}) and use this lagged copy to compute the pseudo-outcome $Q^\dag_{t+1}$ for the regression loss. This decouples the target from immediate parameter updates, smoothing its evolution across training iterations.
Second, we truncate the inverse propensity product in the SDR debiasing term to 20 and clip the SDR-augmented regression targets to [0,1], since outcomes are min-max scaled to [0,1] in data preprocessing. 

\subsection{Model Architecture}
\label{subsec:architecture}


Figure~\ref{fig:d3-net} illustrates the overall architecture of D$^3$-Net. The model is built upon the heterogeneous transformer architecture introduced in DeepLTMLE~\cite{shirakawa2024longitudinal}. Inputs are embedded according to variable type and processed by a transformer encoder to obtain a shared longitudinal representation.
This shared representation is used by three prediction heads. The outcome head ($Q$-head) and the treatment head ($G$-head) estimate the outcome regressions and treatment assignment required by the SDR objective in Section~\ref{subsec:sdr_gcomp}. In addition, we include an auxiliary covariate prediction head ($S$-head) that predicts the next-step covariates $L_{t+1}$ given the current history and treatment.
The $S$-head is trained using
$
\mathcal{L}_S =
\sum_{t=1}^{\tau}
\left(
S_t(A_t,H_t)-L_{t+1}
\right)^2,
$
which provides additional supervision on patient-state dynamics and regularizes the shared representation.

All heads are implemented as two-layer MLPs. The $Q$- and $G$-heads use sigmoid outputs to enforce predictions in $[0,1]$,\footnote{The $Q$-head is trained with an MSE loss between the sigmoid-bounded prediction and the pseudo-outcome.} while the $S$-head uses a linear output layer for continuous covariates.
At step $t$, $G_t$ receives history $H_t$, while $Q_t$ and $S_t$ receive $(H_t,A_t)$. This temporal ordering is enforced through causal masking, preventing access to future observations.

\begin{figure*}[!t]
    \centering
    \begin{subfigure}{0.49\linewidth}
        \includegraphics[width=1\linewidth]{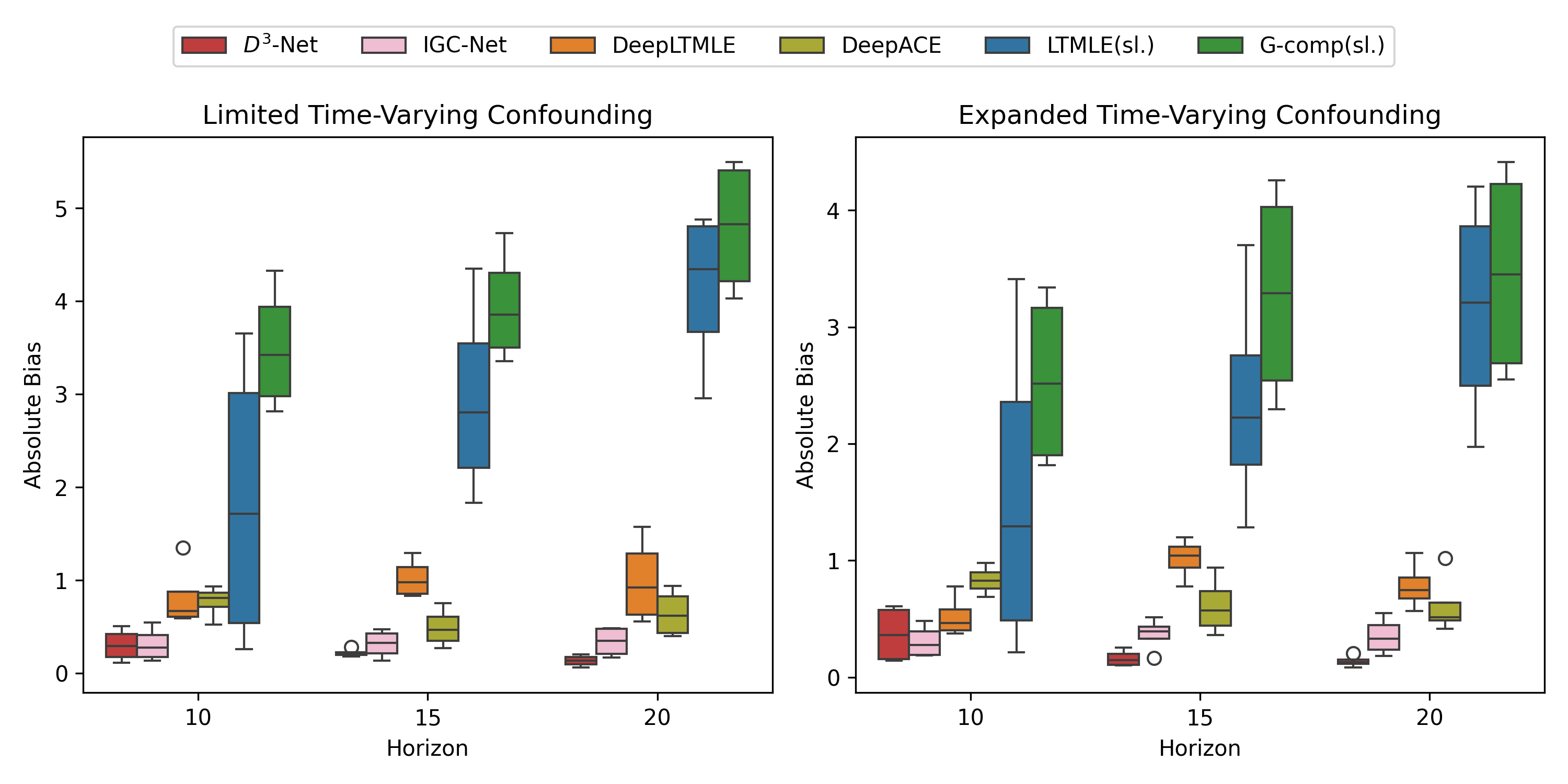}
    \end{subfigure}
    \hfill
    \begin{subfigure}{0.49\linewidth}
        \centering
        \includegraphics[width=1\linewidth]{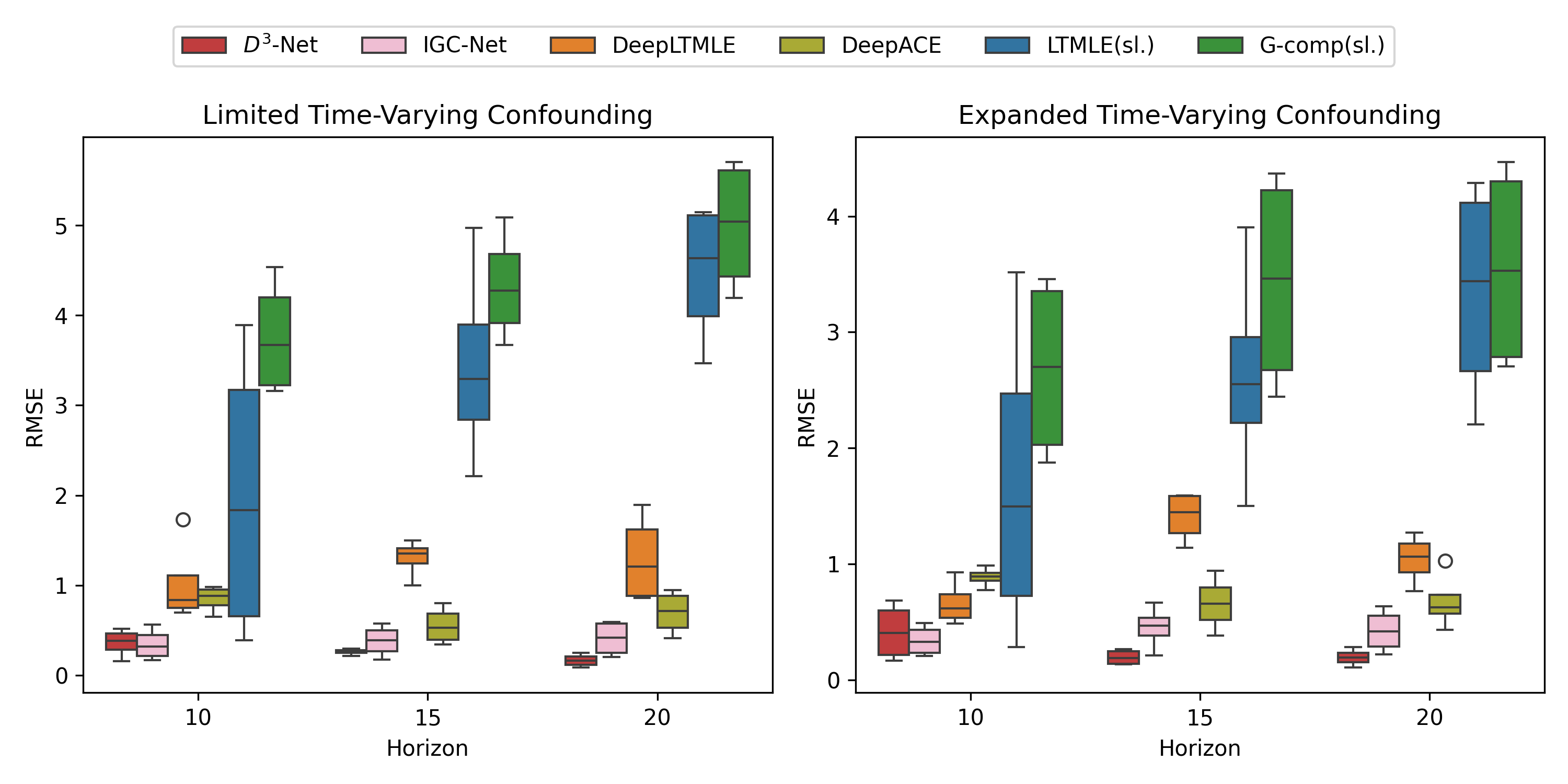}
    \end{subfigure}
    \vspace{-4mm}   
    \caption{Overall performance at horizons $\tau \in \{10,15,20\}$ under different levels of time-varying confounding. Left two: absolute error; right two: RMSE.}
    \label{fig:main_result}
    \vspace{-0.5\baselineskip}
\end{figure*}

\subsection{Re-debias via LTMLE}\label{subsec:reverse_sdr}
As discussed in Section~\ref{subsec:sdr}, SDR relies on correction terms that involve products of inverse propensity scores. While truncation can improve numerical stability, the resulting estimator may remain sensitive to errors in the nuisance estimates. To mitigate this issue, we adopt a hybrid strategy: after training, we drop the SDR correction terms and treat the uncorrected outcome models, $\hat{Q}_t(a_t, H_t)$, as estimated nuisance and perform LTMLE (line 27-32 in Algorithm~\ref{algo:deepsdr}; we describe LTMLE in Algorithm~\ref{appx:ltmle}). Intuitively, SDR training improves the quality of the initial outcome regressions by reducing the accumulation of upstream estimation errors, while LTMLE performs a final targeting step on these improved nuisance estimates. This combination allows us to retain the benefits of SDR training while reducing reliance on the direct SDR correction terms at inference time. The following lemma characterizes how the training targets induce either first-order (ICE) or second-order (SDR) dependence on upstream nuisance errors, clarifying why SDR initialization can yield better initial outcome regressions for subsequent LTMLE targeting (Proof in Appendix~\ref{appx:first_vs_second_order}). 

\begin{restatable}[First- vs.\ second-order dependence in training targets]{lemma}{LemmaFirstSecondOrder}\label{lemma:first_vs_second_order}
Assume bounded outcomes and positivity. Let $\hat Q_t^{\mathrm{ICE}}$ be 
the initial estimate from
regular ICE G-computation,
and let $\hat Q_t^{\mathrm{SDR}}$ be 
the initial estimate from SDR procedure.
Throughout, we suppress function arguments when clear from context; e.g., $Q_t^0$ denotes $Q_t^0(A_t,H_t)$ and $G_k^0$ denotes $G_k^0(A_k\mid H_k)$ unless arguments are shown explicitly.

Then, for each $t=1,\dots,\tau-1$, the population regression errors admit the bounds
\begin{align*}
    \|\hat Q_t^{\mathrm{ICE}} - Q_t^0\|_2 \;&\le\; \xi_t^{\mathrm{ICE}} + \|\hat Q^{\mathrm{ICE}}_{t+1}(a_{t+1},H_{t+1}) - Q_{t+1}^0(a_{t+1},H_{t+1})\|_2,\\
\|\hat Q_t^{\mathrm{SDR}} - Q_t^0\|_2 \;&\le\; \xi_t^{\mathrm{SDR}} + \sum_{k=t+1}^{\tau} C_{t,k}\,
\|\hat G^{\mathrm{SDR}}_k - G_k^0\|_2\,\|\hat Q^{\mathrm{SDR}}_k - Q_k^\dagger\|_2,
\end{align*}
for constants $C_{t,k}$ depending only on positivity and $\tau$, where $\xi_t^{\mathrm{ICE}}$ and $\xi_t^{\mathrm{SDR}}$ denote the regression-to-target error at time $t$ for corresponding estimator.
\end{restatable}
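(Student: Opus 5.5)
The plan is to split each error $\hat Q_t - Q_t^0$ into a regression error and a target error. For either procedure, let $\bar Q_t(A_t,H_t) := \mathbb{E}[\tilde Q_{t+1}\mid A_t,H_t]$ denote the population regression of the realized training target $\tilde Q_{t+1}$ on $(A_t,H_t)$. For ICE, $\tilde Q_{t+1} = \hat Q^{\mathrm{ICE}}_{t+1}(a_{t+1},H_{t+1})$; for SDR, $\tilde Q_{t+1} = \hat Q^\dagger_{t+1}(a_{t+1},H_{t+1})$ from Eq.~\eqref{eq:sdr_target}. We define $\xi_t := \|\hat Q_t - \bar Q_t\|_2$, so that the triangle inequality gives
\[
\|\hat Q_t - Q_t^0\|_2 \le \xi_t + \|\bar Q_t - Q_t^0\|_2 .
\]
The whole proof then reduces to bounding $\|\bar Q_t - Q_t^0\|_2$ for each choice of target.

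\emph{ICE case.} By Eq.~\eqref{eq:temporal_equality}, $Q_t^0 = \mathbb{E}[Q^0_{t+1}(a_{t+1},H_{t+1})\mid A_t,H_t]$. Hence
\[
\bar Q_t - Q_t^0 = \mathbb{E}\big[\hat Q^{\mathrm{ICE}}_{t+1}(a_{t+1},H_{t+1}) - Q^0_{t+1}(a_{t+1},H_{t+1})\,\big|\, A_t,H_t\big].
\]
Conditional expectation is an $L_2$ contraction (conditional Jensen), so $\|\bar Q_t - Q_t^0\|_2$ is at most the $L_2$ error of $\hat Q^{\mathrm{ICE}}_{t+1}$ evaluated at $a_{t+1}$. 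This is exactly the first-order bound.

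\emph{SDR case.} Here we need the bias of the augmented target:
\[
\mathbb{E}[\hat Q^\dagger_{t+1}(a_{t+1},H_{t+1}) - Q^0_{t+1}(a_{t+1},H_{t+1})\mid H_{t+1}].
\]
The standard route is a telescoping identity for the IF-based augmentation, as in AIPTW. Write $\hat Q^\dagger_{t+1}$ as $\hat Q_{t+1}(a_{t+1})$ plus $\sum_{s\ge t+1}\hat W_{t+1:s}\,[\hat Q_{s+1}(a_{s+1}) - \hat Q_s(A_s)]$, where $\hat W_{t+1:s} = \prod_{k=t+1}^s \hat w_k$ uses the estimated propensities. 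Substitute $\hat w_k = w_k^0 + (\hat w_k - w_k^0)$ and iterate expectations backward from $s=\tau$ using sequential ignorability and positivity. Under the true weights the expression telescopes exactly to $Q^0_{t+1}(a_{t+1})$: the terms $\hat Q_{s+1}(a_{s+1})$ cancel against $\mathbb{E}[w_{s+1}^0\hat Q_{s+1}(A_{s+1})\mid H_{s+1}]$. The residual is a sum over $k$ of terms of the form
\[
\mathbb{E}\Big[\Big(\prod_{j<k}\hat w_j\Big)\Big(\frac{\hat w_k}{w^0_k}-1\Big)\, w_k^0\,\big(\hat Q_k - \mathbb{E}[\text{target}_{k+1}\mid A_k,H_k]\big)\Big].
\]
In the regression-target formulation, $\mathbb{E}[\text{target}_{k+1}\mid A_k,H_k] = Q_k^\dagger$. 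Each such term is a product of a propensity error and an outcome error.

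To finish, we bound $|\hat w_k/w^0_k - 1| \lesssim |\hat G_k - G_k^0|$ using positivity: $G_k^0$ and the truncated $\hat G_k$ are bounded away from $\{0,1\}$. The prefactor $\prod_{j<k}\hat w_j$ is bounded by $\delta^{-(k-t)}$. Cauchy--Schwarz, together with the fact that conditional expectation contracts $L_2$, then gives
\[
\|\bar Q_t - Q_t^0\|_2 \le \sum_{k=t+1}^{\tau} C_{t,k}\,\|\hat G_k - G^0_k\|_2\,\|\hat Q_k - Q_k^\dagger\|_2 ,
\]
with $C_{t,k}$ depending only on the positivity bound and $\tau$.

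\textbf{Main obstacle.} The delicate step is the SDR telescoping. One must check that the leftover outcome error after each step is measured against $Q_k^\dagger$, the conditional mean of the SDR target, rather than against $Q_k^0$. Then no first-order outcome term survives, and the unconditioned expectation can be converted to an $L_2$ norm over $(A_t,H_t)$. I would try first to carry out the expansion by backward induction on $s$, treating one time step's cancellation and pulling all weight bounds out via positivity.
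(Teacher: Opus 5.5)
Your outline is the paper's: the triangle inequality around $m_t=\mathbb{E}[\text{target}_{t+1}\mid A_t,H_t]$, conditional Jensen for ICE, and for SDR an exact product-form remainder bounded via positivity. The paper does not derive that remainder. It imports it from Lemma~3 of D\'iaz et al.\ as $m_t^{\mathrm{SDR}}-Q_t^0=\sum_{k=t+1}^{\tau}\mathbb{E}[W_{t,k}(\hat w_k-w_k^0)(\hat Q_k-Q_k^\dagger)\mid A_t,H_t]$. Your own derivation of it is where the gap is: your residual pairs \emph{estimated} upstream weights $\prod_{j<k}\hat w_j$ with an outcome error measured against $Q_k^\dagger$, and that pairing is not an identity.

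To see why, write $W_{i:j}=\prod_{l=i}^{j}w_l$ (empty products equal $1$) and $D_s=\hat Q_{s+1}(a_{s+1},H_{s+1})-\hat Q_s(A_s,H_s)$. An estimated-weight prefactor corresponds to telescoping $\hat W_{t+1:s}-W^0_{t+1:s}=\sum_{k=t+1}^{s}\hat W_{t+1:k-1}(\hat w_k-w_k^0)W^0_{k+1:s}$. The weights downstream of $k$ are then the \emph{true} ones, so $\sum_{s\ge k}W^0_{k+1:s}D_s$ has conditional mean $Q_k^0-\hat Q_k$ given $(A_k,H_k)$, not $Q_k^\dagger-\hat Q_k$. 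This route therefore yields the ordinary DR remainder $\sum_k\mathbb{E}[\hat W_{t+1:k-1}(\hat w_k-w_k^0)(Q_k^0-\hat Q_k)\mid A_t,H_t]$. Its outcome factor is the error against the true regression, which is exactly the dependence the lemma is meant to remove. So your step ``$\mathbb{E}[\text{target}_{k+1}\mid A_k,H_k]=Q_k^\dagger$'' is false for the target that appears there. Your mixed form also fails as an identity in its own right: for $\tau=t+2$ it differs from the true remainder by $\pm\mathbb{E}[(\hat w_{t+1}-w^0_{t+1})(\hat w_{t+2}-w^0_{t+2})(\hat Q_{t+2}-Q^0_{t+2})\mid A_t,H_t]$, which is nonzero in general.

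The repair is to telescope in the opposite order, with true weights upstream and estimated weights downstream: $\hat W_{t+1:s}-W^0_{t+1:s}=\sum_{k=t+1}^{s}W^0_{t+1:k-1}(\hat w_k-w_k^0)\hat W_{k+1:s}$. Now $\sum_{s\ge k}\hat W_{k+1:s}D_s=\hat Q^\dagger_{k+1}(a_{k+1},H_{k+1})-\hat Q_k(A_k,H_k)$ is the actual SDR pseudo-outcome minus $\hat Q_k$, whose conditional mean given $(A_k,H_k)$ is $Q_k^\dagger-\hat Q_k$. As you note, the all-true-weight version of the target still has conditional mean $Q^0_{t+1}(a_{t+1},H_{t+1})$ given $H_{t+1}$. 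Equivalently, you can induct backward on the one-step recursion $\hat Q^\dagger_s(a_s,H_s)=\hat Q_s(a_s,H_s)+\hat w_s[\hat Q^\dagger_{s+1}(a_{s+1},H_{s+1})-\hat Q_s(A_s,H_s)]$. Either way you recover the paper's display with $W_{t,k}=-\prod_{j=t+1}^{k-1}w_j^0$, and your positivity bound on the prefactor carries over unchanged.

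One further caution applies equally to the paper's proof. The step $\|(\hat w_k-w_k^0)(\hat Q_k-Q_k^\dagger)\|_2\le\|\hat w_k-w_k^0\|_2\|\hat Q_k-Q_k^\dagger\|_2$ is not Cauchy--Schwarz and fails in general. With bounded quantities you can instead get $\|\hat G_k-G_k^0\|_\infty\|\hat Q_k-Q_k^\dagger\|_2$ for the $L_2$ statement. Alternatively, keep the product of $L_2$ norms but only as a bound on $\|m_t^{\mathrm{SDR}}-Q_t^0\|_1$.
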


This lemma shows that, under regular ICE G-computation, the error in $\hat Q^{\mathrm{ICE}}_t$ exhibits first-order dependence on upstream outcome regression error. We note that $\|\hat Q^{\mathrm{ICE}}_{t+1}(a_{t+1},H_{t+1}) - Q_{t+1}^0(a_{t+1},H_{t+1})\|_2$ is evaluated at $(a_{t+1},H_{t+1})$ and thus this bound does not directly telescope into a horizon-wide inequality under the same shorthand norm. In contrast, using training targets debiased by SDR transformation yields a second-order dependence on upstream outcome and propensity-score estimation errors. Notably, the product doesn't involve the next-step error to the true model ($\|\hat Q^{\mathrm{SDR}}_k - Q_k^0\|_2$), but involves the error to the constructed target ($\|\hat Q^{\mathrm{SDR}}_k - Q_k^\dagger\|_2$). Consequently, the effect of upstream nuisance estimation enters only at second order, which makes the error propagation across time more amenable to control. $D^3$-Net couples this improved nuisance estimation with LTMLE targeting, making the multiply robust consistency condition of LTMLE (Lemma~\ref{lemma:ltmle_multiply_robust}) easier to satisfy in practice.

\section{Experiments}

\begin{table*}[!t]
\centering
\caption{Absolute bias (mean $\pm$ std) and RMSE of CAPO estimates at horizon $\tau = 15$.}
\vspace{-10pt}
\label{tab:results-tau15}
\resizebox{0.95\linewidth}{!}{%
\begin{tabular}{l
 cccc cccc
 cccc cccc}
\toprule
& \multicolumn{8}{c}{\textbf{Limited Time-varying Confounding}} 
& \multicolumn{8}{c}{\textbf{Expanded Time-varying Confounding}} \\
\cmidrule(lr){2-9} \cmidrule(lr){10-17}

\textbf{Model}
& \multicolumn{4}{c}{\textbf{Bias}}
& \multicolumn{4}{c}{\textbf{RMSE}}
& \multicolumn{4}{c}{\textbf{Bias}}
& \multicolumn{4}{c}{\textbf{RMSE}} \\

& CF 1 & CF 2 & CF 3 & CF 4
& CF 1 & CF 2 & CF 3 & CF 4
& CF 1 & CF 2 & CF 3 & CF 4
& CF 1 & CF 2 & CF 3 & CF 4 \\
\midrule

\rowcolor{gray!6}
G-comp (sl.)
& 3.55$\pm$1.88 & 4.73$\pm$1.91 & 3.35$\pm$1.54 & 4.16$\pm$1.90
& 3.99 & 5.09 & 3.67 & 4.55
& 2.62$\pm$0.85 & 3.95$\pm$1.38 & 2.30$\pm$0.85 & 4.26$\pm$1.01
& 2.75 & 4.17 & 2.44 & 4.37 \\

LTM. (sl.)
& 3.28$\pm$1.37 & 4.35$\pm$2.47 & 1.83$\pm$1.27 & 2.33$\pm$2.02
& 3.54 & 4.97 & 2.21 & 3.05
& 2.44$\pm$1.03 & 3.70$\pm$1.27 & 1.28$\pm$0.79 & 2.00$\pm$1.46
& 2.64 & 3.90 & 1.50 & 2.46 \\

\rowcolor{gray!6}
D.ACE
& 0.75$\pm$0.27 & 0.37$\pm$0.17 & 0.56$\pm$0.33 & 0.27$\pm$0.21
& 0.80 & 0.41 & 0.65 & 0.34
& 0.94$\pm$0.11 & 0.36$\pm$0.12 & 0.67$\pm$0.34 & 0.47$\pm$0.32
& 0.94 & 0.38 & 0.75 & 0.56 \\

D.LTMLE
& 0.87$\pm$0.51 & 1.09$\pm$0.88 & 1.30$\pm$0.77 & 0.83$\pm$1.06
& 1.00 & 1.39 & 1.50 & 1.32
& 0.78$\pm$0.86 & 0.99$\pm$0.87 & 1.20$\pm$1.06 & 1.09$\pm$1.19
& 1.14 & 1.31 & 1.58 & 1.59 \\

\rowcolor{gray!6}
IGC-Net
& \underline{0.41$\pm$0.25} & \textbf{0.14$\pm$0.11} & \underline{0.47$\pm$0.33} & \underline{0.24$\pm$0.17}
& \underline{0.48} & \textbf{0.17} & \underline{0.57} & \underline{0.30}
& \underline{0.51$\pm$0.44} & \underline{0.17$\pm$0.13} & \underline{0.40$\pm$0.29} & \underline{0.38$\pm$0.23}
& \underline{0.67} & \underline{0.21} & \underline{0.49} & \underline{0.44} \\

$D^3$-Net
& \textbf{0.18$\pm$0.12} & \underline{0.20$\pm$0.17} & \textbf{0.28$\pm$0.09} & \textbf{0.20$\pm$0.19}
& \textbf{0.21} & \underline{0.26} & \textbf{0.30} & \textbf{0.27}
& \textbf{0.11$\pm$0.09} & \textbf{0.10$\pm$0.09} & \textbf{0.25$\pm$0.09} & \textbf{0.18$\pm$0.16}
& \textbf{0.14} & \textbf{0.14} & \textbf{0.27} & \textbf{0.24} \\

\bottomrule
\end{tabular}
}
\vspace{-0.5\baselineskip}
\end{table*}

\begin{table*}[!t]
\centering
\caption{Absolute bias (mean $\pm$ std) and RMSE of CAPO estimates at horizon $\tau = 20$.}
\vspace{-10pt}
\label{tab:results-tau20}
\resizebox{0.95\linewidth}{!}{%
\begin{tabular}{l
 cccc cccc
 cccc cccc}
\toprule
& \multicolumn{8}{c}{\textbf{Limited Time-varying Confounding}} 
& \multicolumn{8}{c}{\textbf{Expanded Time-varying Confounding}} \\
\cmidrule(lr){2-9} \cmidrule(lr){10-17}

\textbf{Model}
& \multicolumn{4}{c}{\textbf{Bias}}
& \multicolumn{4}{c}{\textbf{RMSE}}
& \multicolumn{4}{c}{\textbf{Bias}}
& \multicolumn{4}{c}{\textbf{RMSE}} \\

& CF 1 & CF 2 & CF 3 & CF 4
& CF 1 & CF 2 & CF 3 & CF 4
& CF 1 & CF 2 & CF 3 & CF 4
& CF 1 & CF 2 & CF 3 & CF 4 \\
\midrule

\rowcolor{gray!6}
G-comp (sl.)
& 4.27$\pm$1.48 & 5.38$\pm$1.54 & 4.03$\pm$1.20 & 5.49$\pm$1.57
& 4.51 & 5.58 & 4.19 & 5.70
& 2.55$\pm$0.92 & 4.16$\pm$0.85 & 2.74$\pm$0.66 & 4.41$\pm$0.71
& 2.70 & 4.24 & 2.81 & 4.47 \\

LTM. (sl.)
& 3.91$\pm$1.49 & 4.88$\pm$1.69 & 2.95$\pm$1.86 & 4.78$\pm$1.84
& 4.17 & 5.15 & 4.19 & 5.70
& 2.67$\pm$0.91 & 4.20$\pm$0.87 & 1.97$\pm$1.01 & 3.75$\pm$1.60
& 2.82 & 4.29 & 2.20 & 4.06 \\

\rowcolor{gray!6}
D.ACE
& 0.94$\pm$0.15 & 0.40$\pm$0.10 & 0.79$\pm$0.35 & \underline{0.44$\pm$0.37}
& 0.95 & 0.41 & 0.86 & \underline{0.57}
& 1.02$\pm$0.12 & 0.42$\pm$0.12 & 0.51$\pm$0.35 & 0.51$\pm$0.38
& 1.03 & 0.43 & 0.62 & 0.63 \\

D.LTMLE
& 0.65$\pm$0.57 & 1.58$\pm$1.08 & 0.56$\pm$0.71 & 1.19$\pm$0.98
& 0.86 & 1.90 & 0.89 & 1.53
& 0.57$\pm$0.53 & 1.07$\pm$0.71 & 0.71$\pm$0.70 & 0.78$\pm$0.86
& 0.77 & 1.27 & 0.98 & 1.15 \\

\rowcolor{gray!6}
IGC-Net
& \underline{0.48$\pm$0.30} & \underline{0.17$\pm$0.11} & \underline{0.22$\pm$0.15} & 0.48$\pm$0.36
& \underline{0.57} & \textbf{0.20} & \underline{0.27} & 0.59
& \underline{0.55$\pm$0.33} & \underline{0.18$\pm$0.13} & \underline{0.25$\pm$0.18} & \underline{0.41$\pm$0.34}
& \underline{0.63} & \textbf{0.22} & \underline{0.31} & \underline{0.53} \\

$D^3$-Net
& \textbf{0.11$\pm$0.07} & \textbf{0.16$\pm$0.12} & \textbf{0.06$\pm$0.06} & \textbf{0.20$\pm$0.15}
& \textbf{0.13} & \textbf{0.20} & \textbf{0.09} & \textbf{0.25}
& \textbf{0.13$\pm$0.11} & \textbf{0.13$\pm$0.18} & \textbf{0.08$\pm$0.07} & \textbf{0.21$\pm$0.20}
& \textbf{0.17} & \textbf{0.22} & \textbf{0.11} & \textbf{0.28} \\

\bottomrule
\end{tabular}
}
\vspace{-0.5\baselineskip}
\end{table*}

We assess $D^3$-Net using semi-synthetic benchmarks constructed from MIMIC-III~\cite{PhysioNet-mimiciii-1.4}, a real-world critical care dataset. 
Section~\ref{subsec:setup} describes the experimental design. Building on the benchmarking configuration of DeepACE, we further evaluate our methods under stronger time-varying confoundings and various horizons. Section~\ref{subsec:results} reports quantitative results, where $D^3$-Net demonstrates consistent lower bias and variance in these challenging settings. 
Section~\ref{subsec:ablation} presents ablation studies.

\subsection{Setup}\label{subsec:setup}

\paragraph{{Baselines}}
We benchmark $D^3$-Net against ICE-based longitudinal estimators, including (1) G-computation with Super Learner, (2) LTMLE with Super Learner~\cite{lendle2017ltmle}, (3) DeepACE \cite{frauen2023estimating}, (4) DeepLTMLE \cite{shirakawa2024longitudinal}, and (5) IGC-Net~\cite{hess2024igc}. Methods (3)-(5) are ICE G-computation-based but differ in their architecture and debiasing strategies: DeepACE is LSTM-based and incorporates an influence-function-based objective; DeepLTMLE is transformer-based and applies LTMLE as a post-training correction; IGC-Net
employs a more sophisticated multi-transformer architecture without explicit debiasing.
We include IGC-Net primarily as an extra benchmark. As we will show, IGC‑Net’s strong architecture often outperforms existing debiasing methods (DeepACE, DeepLTMLE), yet $D^3$-Net consistently outperforms IGC‑Net, demonstrating the value of our SDR‑based training and final LTMLE correction. A direct, architecture‑controlled comparison (applying our debiasing to IGC‑Net’s backbone) would further isolate the benefit of debiasing; we leave this as a natural extension.
Implementation details are provided in Appendix~\ref{appx:implementation}.

\paragraph{{Settings}}
We begin with the semi-synthetic DGP proposed in DeepACE, which combines 10 real time-varying clinical measurements from MIMIC-III with synthetically generated treatment assignments and outcomes. In this setting, treatment affects the single outcome variable but does not affect any other covariates, yielding a limited treatment–confounder feedback mechanism. 
To study more challenging scenarios, we further augment this DGP with five additional synthetic time-varying covariates that both affect and are affected by treatment, thereby inducing an expanded and more realistic feedback structure. We refer to these two settings as \emph{Limited Time-varying Confounding} and \emph{Expanded Time-varying Confounding} throughout the paper. We also experimented with high-dimensional synthetic covariates in the expanded DGP. Full specifications of all DGPs are provided in Appendix~\ref{appx:dgp}.

For each DGP, we sample $N=1000$ patient trajectories and generate longitudinal data over different horizon lengths $\tau$. We begin with $\tau=15$, as chosen by DeepACE. Hyperparameter tuning procedures for all baselines are detailed in Appendix~\ref{appx:implementation}.


Because longitudinal estimators can exhibit different bias behavior under different counterfactuals, we evaluate each method across multiple counterfactual sequences rather than a single one. Similar to DeepACE, we evaluate four counterfactual (CF) sequences for each horizon: 
(1) CF 1 = $(0,\ldots,0)$; (2) CF 2 = $(1,\ldots,1)$;
(3) CF 3 =$(\underbrace{1,\ldots,1}_{s_\tau},
\underbrace{0,\ldots,0}_{\tau-s_\tau})$, 
(4) CF 4 = $(\underbrace{0,\ldots,0}_{\tau-s_\tau},
\underbrace{1,\ldots,1}_{s_\tau}),$
where $s_\tau=10$ for $\tau\in\{15,20\}$, $s_\tau=5$ for $\tau=10$.
All experiments are repeated over 20 seeds for each method, horizon, and counterfactual setting.
We report the average
RMSE over
the 20 runs.

\subsection{Results}\label{subsec:results}

\begin{figure*}[t]
    \centering
    \includegraphics[width=\linewidth]{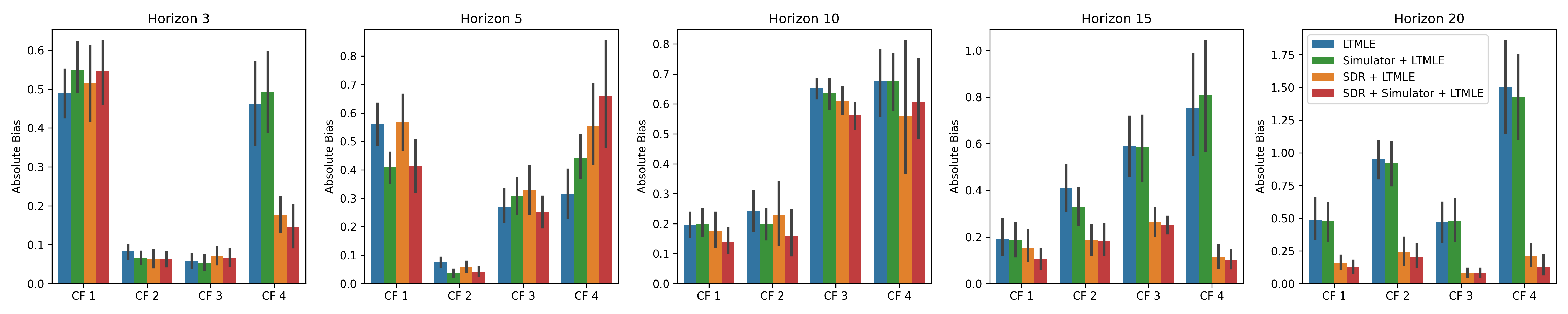}
    \vspace{-15pt}
    \caption{Ablation study on training components of $D^3$-Net.}
\label{fig:ablation_horizon_combined}
    \vspace{-\baselineskip}
\end{figure*}

\begin{figure*}[t]
    \centering
    \begin{minipage}{0.32\linewidth}
        \centering
        \includegraphics[width=\linewidth]{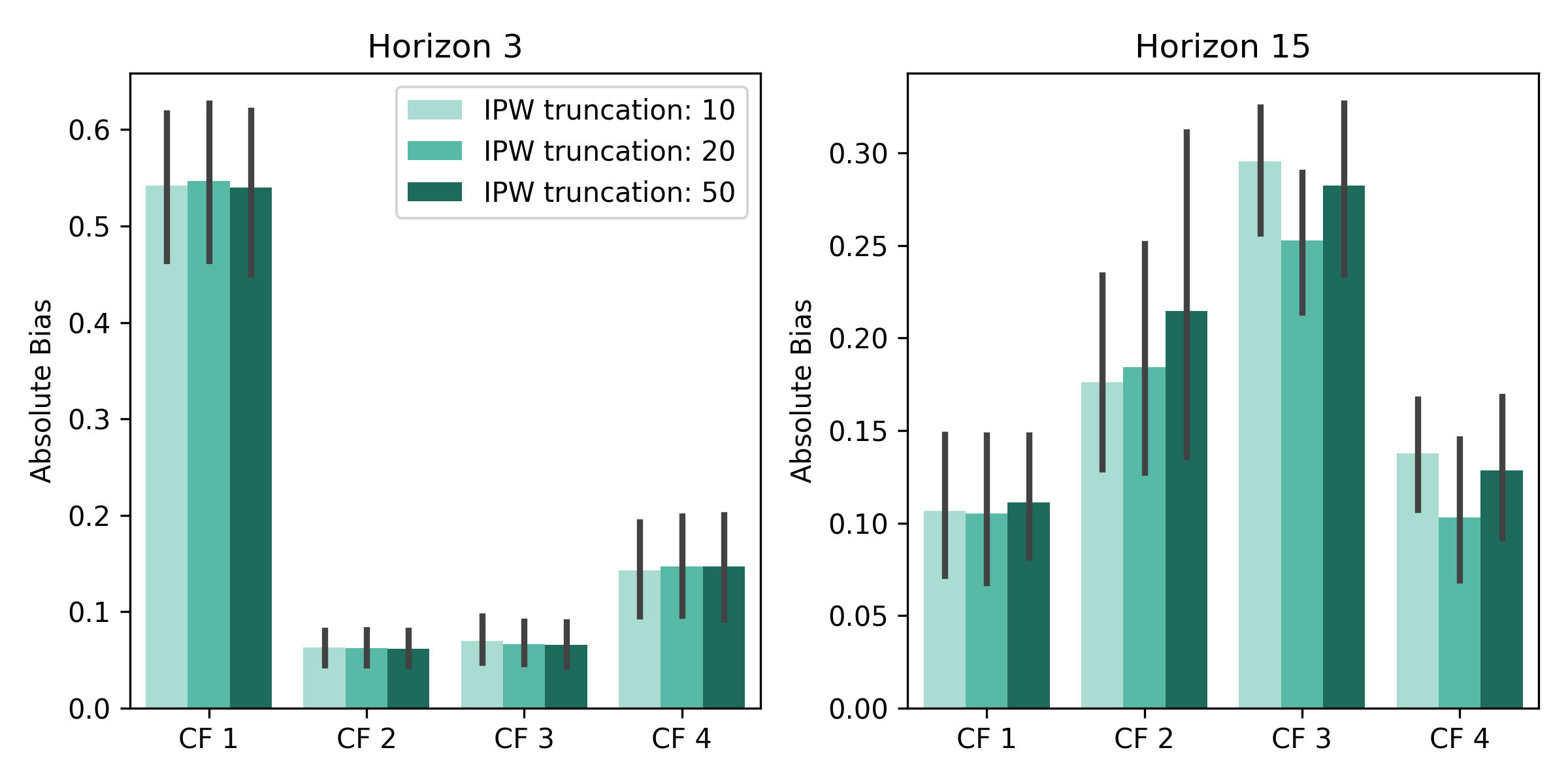}
        \vspace{-10pt}
        \caption{Varying IPW truncation thresholds in the SDR correction}
        \label{fig:ablation_trunc}
    \end{minipage}
    \hfill
    \begin{minipage}{0.32\linewidth}
        \centering
        \includegraphics[width=1\linewidth]{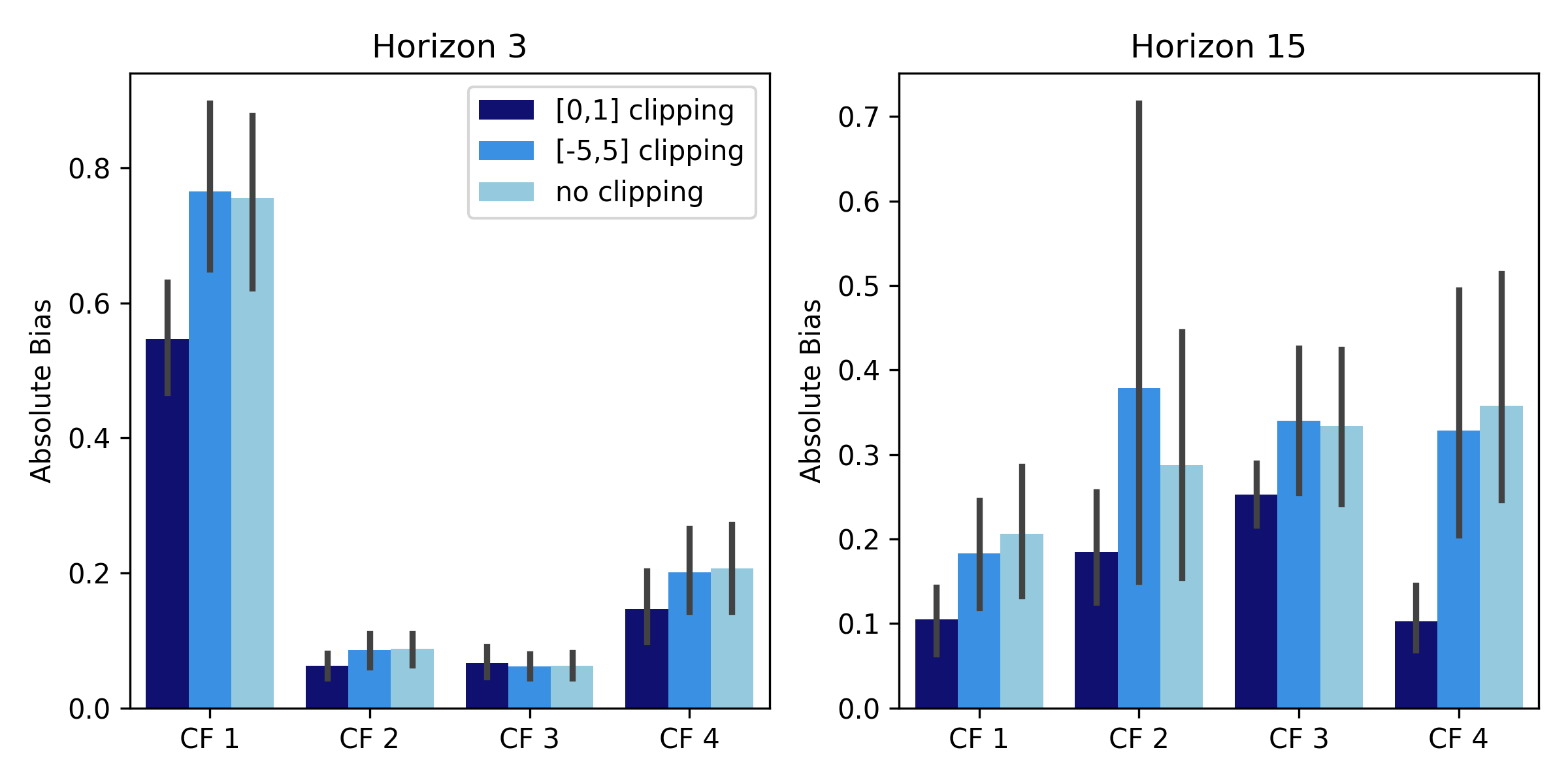}
        \vspace{-10pt}
        \caption{Varying clipping thresholds of SDR-augmented pseudo-outcome}
        \label{fig:ablation_clip}
    \end{minipage}
    \hfill
    \begin{minipage}{0.32\linewidth}
    \centering
    \includegraphics[width=1.05\linewidth]{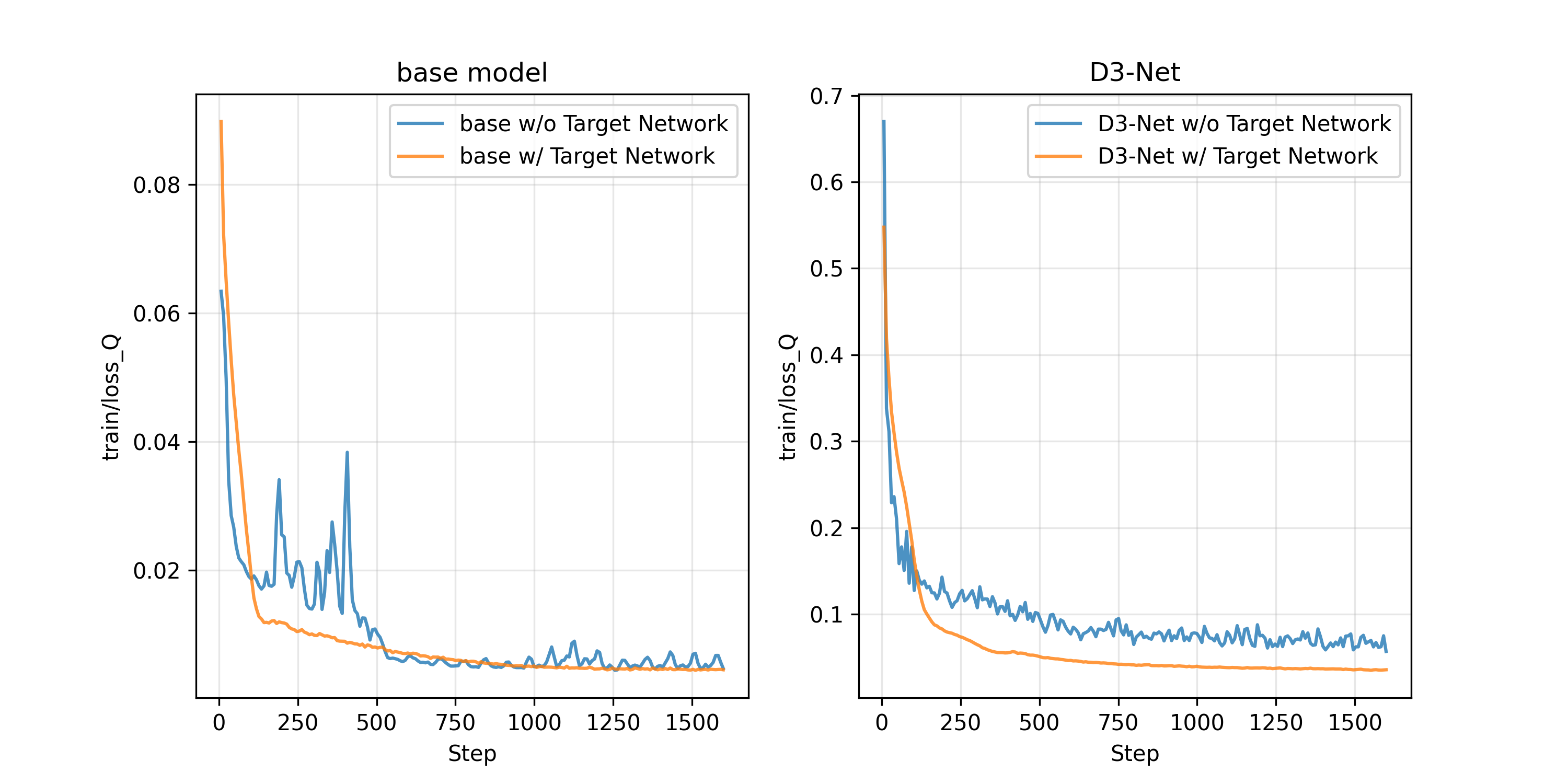}
    \vspace{-18pt}
    \caption{Training curve w/ and w/o Target Network}
    \label{fig:ablation_tar}
    \end{minipage}
    \vspace{-\baselineskip}
\end{figure*}

Figure~\ref{fig:main_result} summarizes performance by aggregating bias across counterfactual sequences. Across horizons and under both limited and expanded time-varying confounding, $D^3$-Net consistently ranks among the strongest-performing methods and generally achieves the lowest overall error, with its advantage becoming more pronounced at longer horizons. Tables~\ref{tab:results-tau15}, \ref{tab:results-tau20}, \ref{tab:results-tau10}, and \ref{tab:results-tau15-numz} report detailed results for each individual counterfactual sequence.

Beyond average performance, a key strength of $D^3$-Net is its \textbf{robustness across different settings}. While several baselines perform competitively for specific counterfactuals under limited confounding and short horizons, their performance degrades markedly as the horizon increases or the treatment–confounder feedback becomes stronger. In contrast, $D^3$-Net consistently attains the lowest or second-lowest RMSE across all horizons, confounding settings, and counterfactual sequences, demonstrating robust performance. This stability is particularly pronounced in the more challenging settings with longer horizons and expanded confounding.

Notably, despite having a similar expressiveness as DeepLTMLE because both models use a transformer backbone, $D^3$-Net yields substantial improvement. This gap highlights the benefit of training nuisances using the proposed SDR-debiased supervision together with auxiliary supervision. 
Overall, these results show that the advantages of $D^3$-Net are not confined to a single policy or regime, but persist across horizons, confounding strengths, and counterfactual sequences, highlighting its robustness to error propagation and treatment–confounder feedback.

\subsection{Ablation Study}\label{subsec:ablation}

\paragraph{SDR augmentation and auxiliary supervision} Figure~\ref{fig:ablation_horizon_combined} and Figure~\ref{fig:ablation} (left)
compare variants that selectively enable SDR and/or the simulator. Aggregated over counterfactual sequences, SDR is generally effective at reducing estimation bias, with the largest gains observed at longer horizons. This trend suggests that SDR helps mitigate the accumulation of prediction errors along the iterative G-computation recursion, thereby reducing error propagation. In contrast, auxiliary supervision alone provides only modest improvements over the baseline. However, when combined with SDR, it consistently yields further reductions in bias, indicating that the simulator contributes complementary information that enhances the effectiveness of the SDR correction.

\paragraph{Stabilization heuristics}
We adopt inverse-propensity truncation and clipping to stabilize the SDR-augmented pseudo-outcome. Figures~\ref{fig:ablation_trunc} and \ref{fig:ablation_clip} indicate that the model is 
insensitive to these heuristics at short horizons, but becomes increasingly sensitive as the horizon grows. For inverse-propensity truncation, strong truncation tends to increase bias, while weak truncation can increase variability through extreme weights. Pseudo-outcome clipping is generally beneficial, though its effect is likewise modest at short horizons. These findings are consistent with the overlap structure of the problem: shorter horizons exhibit better overlap and are therefore less sensitive to stabilization heuristics, whereas longer horizons are more susceptible to near-positivity violations and the resulting weight instability. 

\paragraph{Target Network}
Figure~\ref{fig:ablation_tar} shows that the target network substantially stabilizes the training dynamics, while Fig.~\ref{fig:ablation} (right)
shows no consistent effect on predictive performance. This suggests that the target network primarily serves as a stabilization mechanism rather than a direct source of performance gains.

\paragraph{Final LTMLE re-debiasing step} Following \citep{diaz2023nonparametric}, which studies the robustness of SDR and LTMLE under nuisance-model misspecification, we evaluate the impact of the final LTMLE targeting step by perturbing the learned outcome model. Figure~\ref{fig:ablation_ltmle} (left)
compares the raw SDR and LTMLE-updated estimates using the original nuisance models, while Figure~\ref{fig:ablation_ltmle} (right)
repeats the comparison after shifting the outcome predictions on the logit scale.\footnote{We perturb the outcome model as $Q_{\mathrm{pert}}=\mathrm{expit}(\mathrm{logit}(Q)+0.5)$.} Without perturbation, the two estimators exhibit similar bias across all horizons, indicating that the initial SDR estimate is already close to the target parameter. After perturbation, however, the LTMLE-updated estimator exhibits lower bias than the raw SDR estimate. This result suggests that the targeting step can partially correct errors in the outcome model and improve robustness to nuisance-model misspecification.

\paragraph{Runtime} D3-Net introduces only a small runtime overhead. The target network incurs negligible cost, the auxiliary supervision requires no additional forward passes, and the SDR objective reuses quantities already computed during ICE recursion. 
Figure~\ref{fig:runtime_real_world} (left)
confirms that SDR introduces negligible overhead across horizons.
 

\begin{figure}[t]
    \centering
    \begin{subfigure}{0.48\linewidth}
        \centering
        \includegraphics[width=\linewidth]{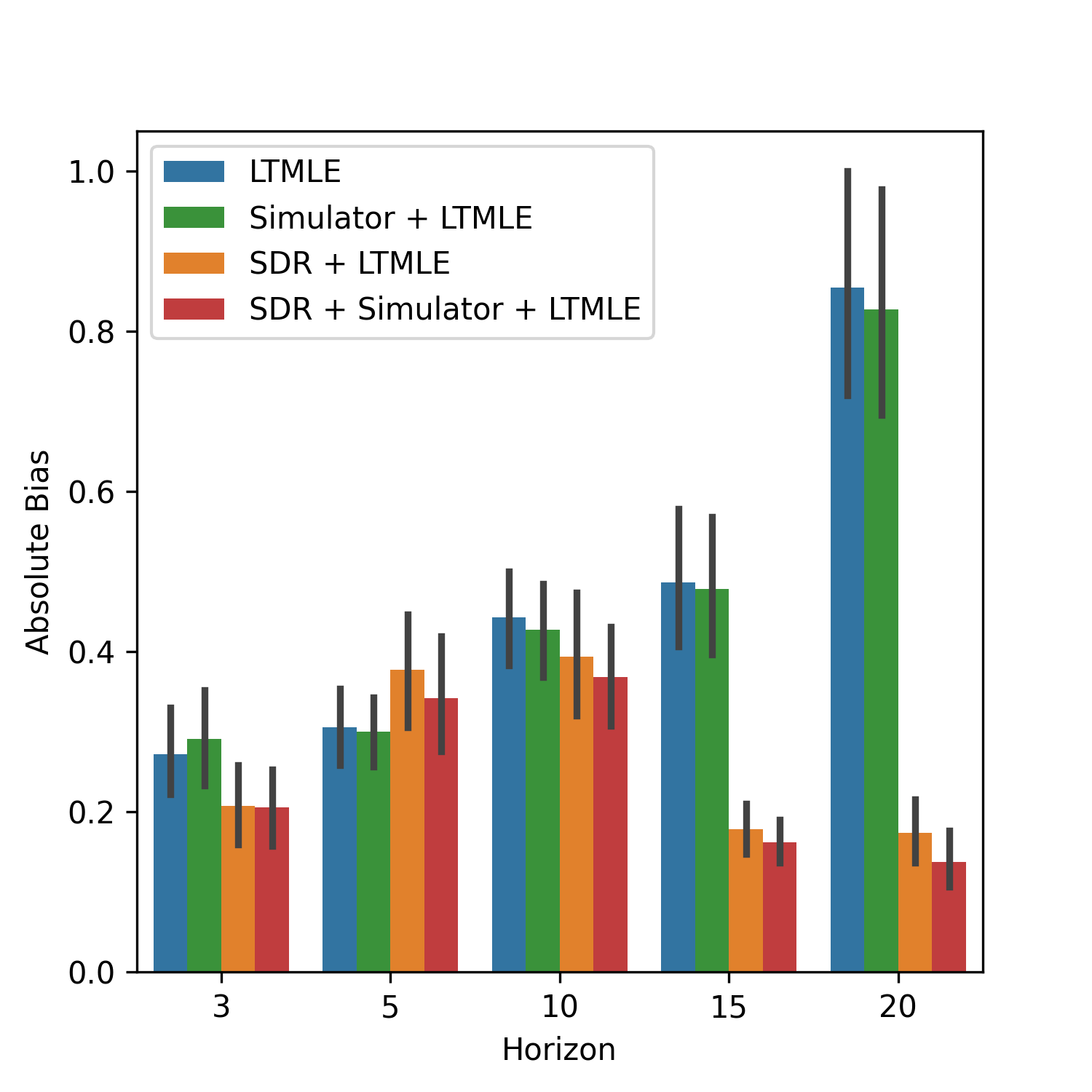}
        \label{fig:ablation:components}
    \end{subfigure}
    \hfill
    \begin{subfigure}{0.48\linewidth}
        \centering
        \includegraphics[width=\linewidth]{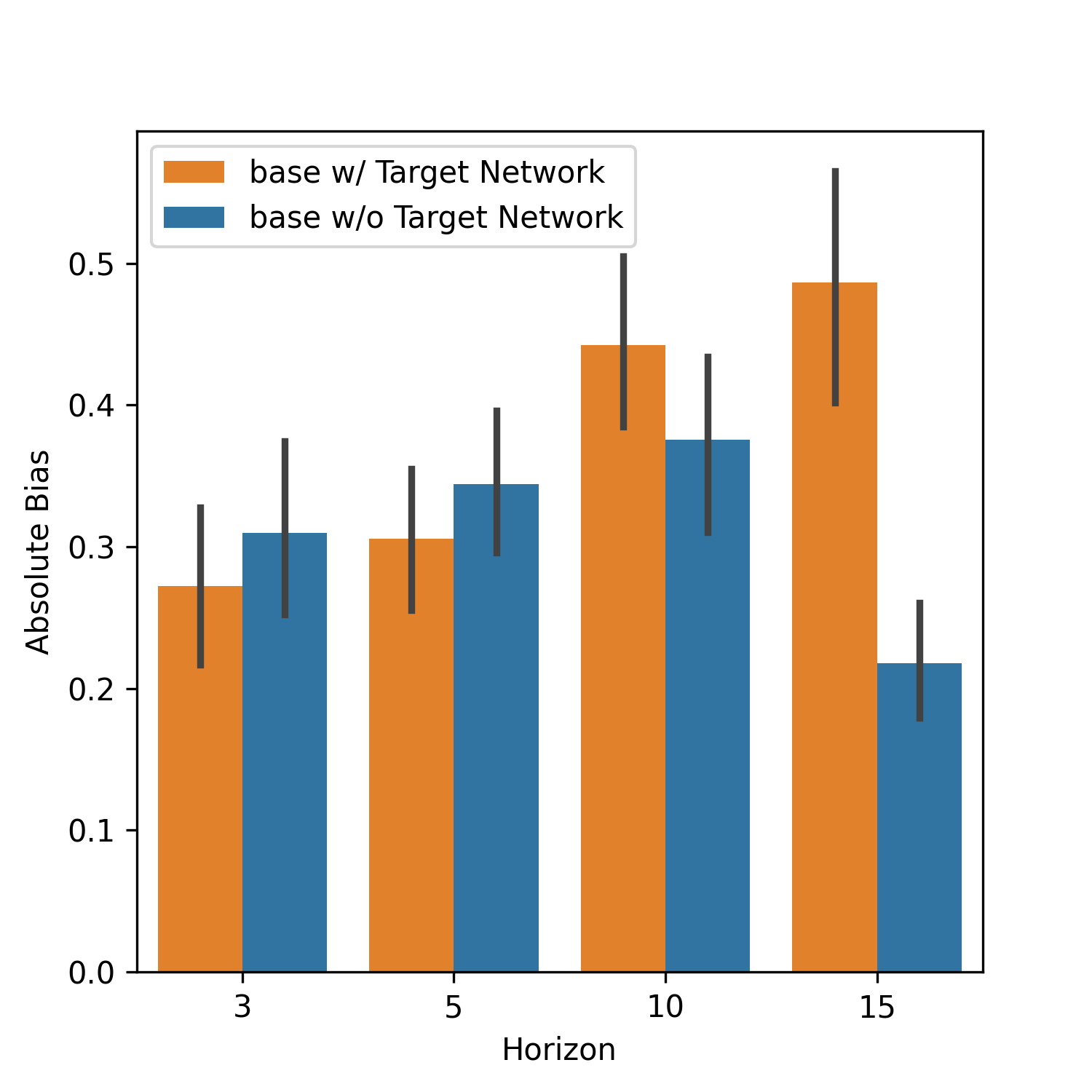}
        \label{fig:ablation:noTar}
    \end{subfigure}
    \vspace{-18pt}
    \caption{Ablation. Left: Contribution of SDR correction and Auxiliary Supervision. Right: Effect of the target network. Y-axis shows the mean absolute bias $\pm$ standard deviation.}
    \label{fig:ablation}
    \vspace{-\baselineskip}
\end{figure}

\begin{figure}[t]
    \centering
    \begin{subfigure}{0.49\linewidth}
        \centering
        \includegraphics[width=\linewidth]{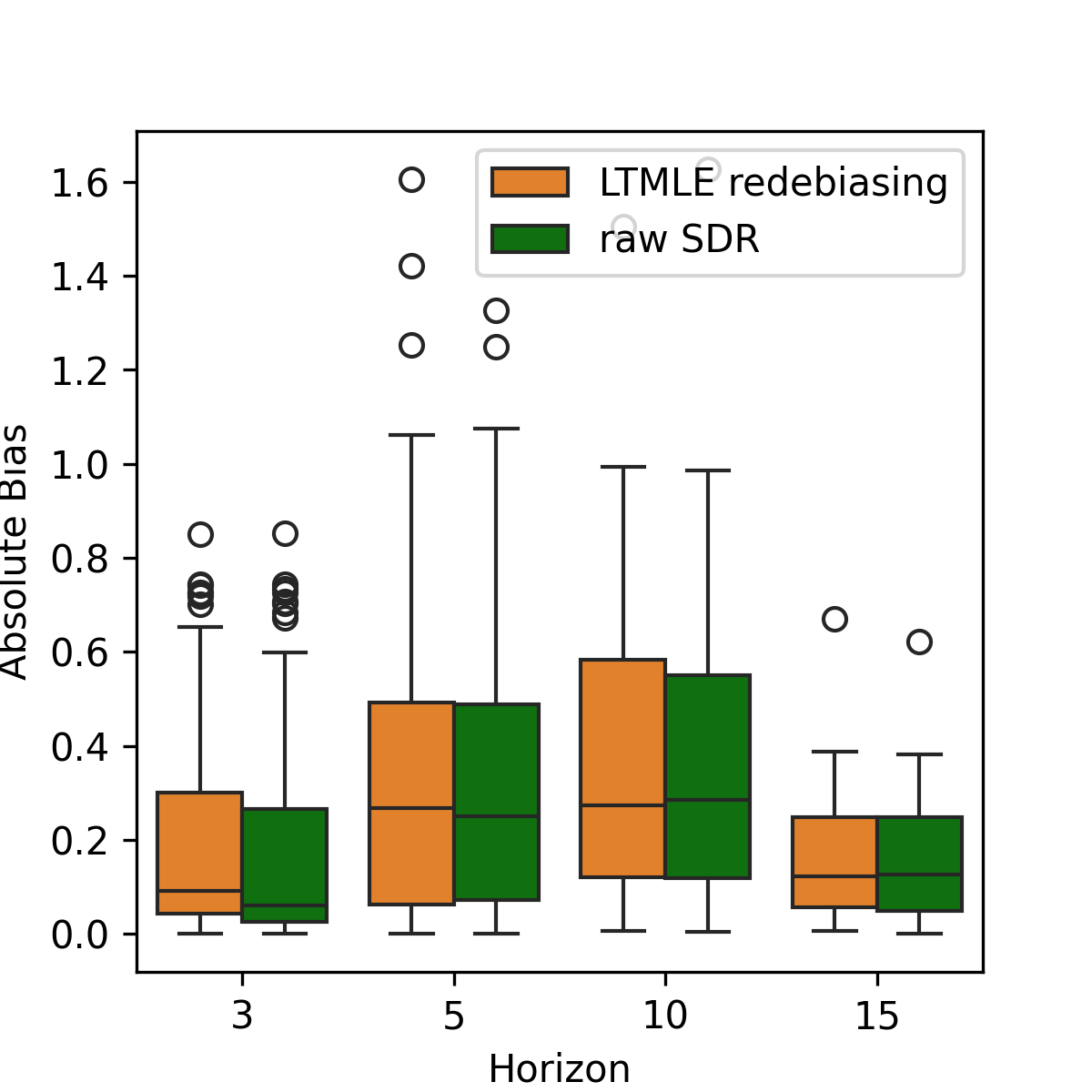}
        \label{fig:ltmle_original}
    \end{subfigure}
    \hfill
    \begin{subfigure}{0.49\linewidth}
        \centering
        \includegraphics[width=\linewidth]{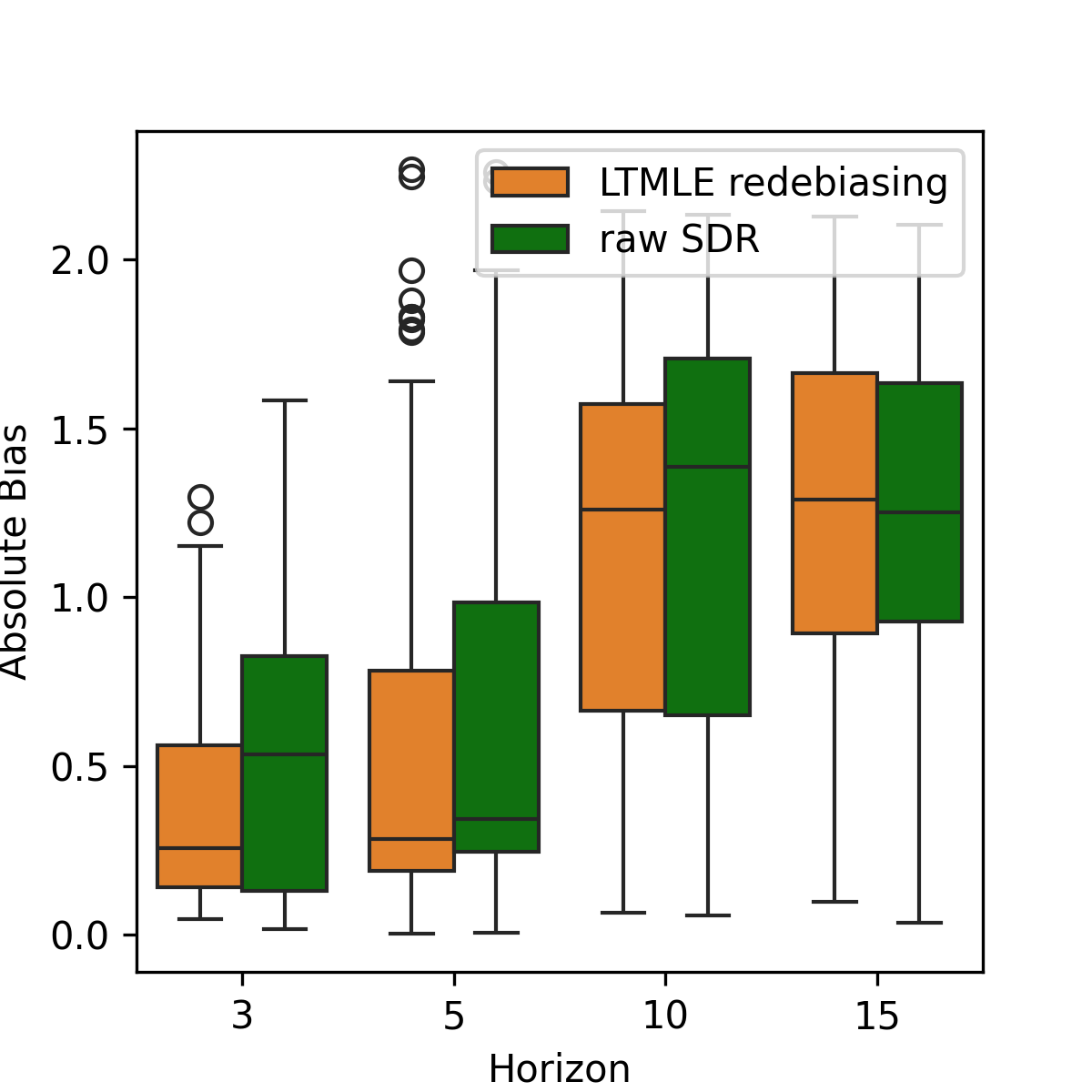}
        \label{fig:ltmle_q_corrupted}
    \end{subfigure}
    \vspace{-15pt}
    \caption{Comparison of raw SDR estimates and LTMLE re-debiasing before (left) and after (right) perturbing the learned (Q)-function. LTMLE re-debiasing is less sensitive to perturbations of the nuisance estimates.}
    \label{fig:ablation_ltmle}
    \vspace{-\baselineskip}
\end{figure}


\subsection{Real-world Case Study}\label{subsec:real-world}
We applied $D^3$-Net on an intensive care cohort derived from MIMIC-IV, focusing on patients with sepsis complicated by hypotension to study the effects of alternative vasopressor weaning strategies. The study population consists of 999 adult ICU patients who satisfied Sepsis-3 criteria, experienced hypotension (MAP $\leq 65$ mmHg), and received vasopressor within the first 24 hours after ICU admission. As the clinical endpoint, we considered serum lactate measured at 72 hours, which is widely used as an indicator of tissue hypoperfusion and disease severity in septic shock~\cite{nguyen2010early}.

We examined four treatment regimes corresponding to discontinuing vasopressors once MAP had been sustained above 65, 70, 75, or 80 mmHg for a 12-hour window.
Figure~\ref{fig:runtime_real_world} (right)
summarizes the resulting CAPO estimates of 20 Bootstrap experiments. Across these policies, we observe a pattern in which more aggressive MAP targets are associated with slightly higher lactate levels at 72 hours (higher lactate often implies worse outcome). This observation is consistent with prior randomized evidence showing no survival advantage from targeting higher MAP thresholds~\cite{asfar2014high}, and it is directionally aligned with current clinical guidelines that caution against unnecessary vasopressor exposure~\cite{evans2021surviving}.

\section{Conclusion}
We propose $D^3$-Net, a two-stage framework that integrates SDR-based training with LTMLE targeting, using a target network and simulator to stabilize and regularize deep ICE G-computation. Experiments demonstrate its robustness across settings.

\section{GenAI Disclosure}
We used generative AI tools only for language polishing and code debugging. All scientific ideas, methods, and experiments were developed and validated by the authors.

\section{Limitations and Ethical Considerations}
We use the publicly available, de-identified MIMIC-III and MIMIC-IV datasets in accordance with their data use policies. As with all observational studies, our findings rely on standard causal assumptions and may be affected by unmeasured confounding.

\begin{figure}[t]
    \centering
    \begin{subfigure}{0.48\linewidth}
        \includegraphics[width=\linewidth]{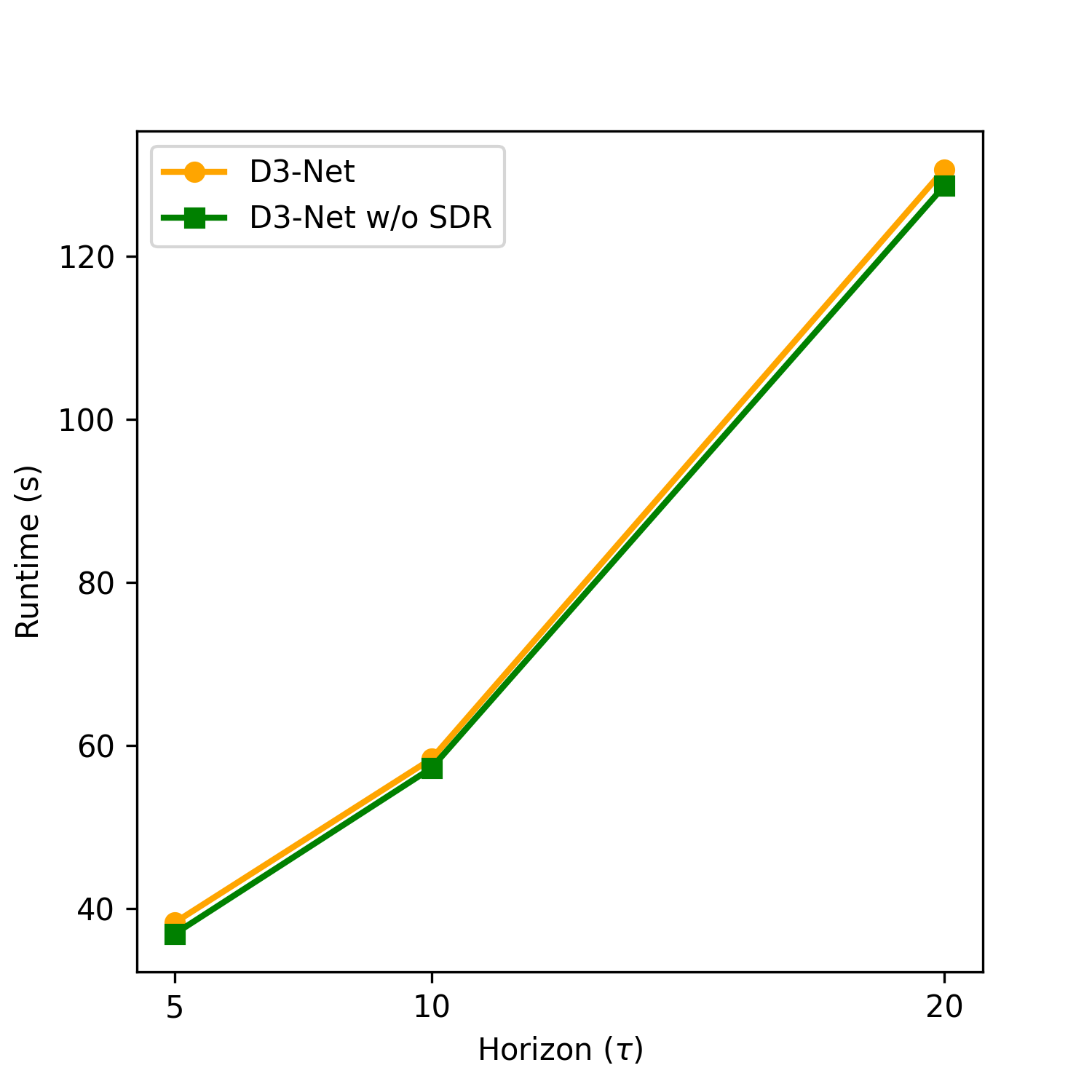}
        \label{fig:runtime}
    \end{subfigure}
    \begin{subfigure}{0.48\linewidth}
        \includegraphics[width=\linewidth]{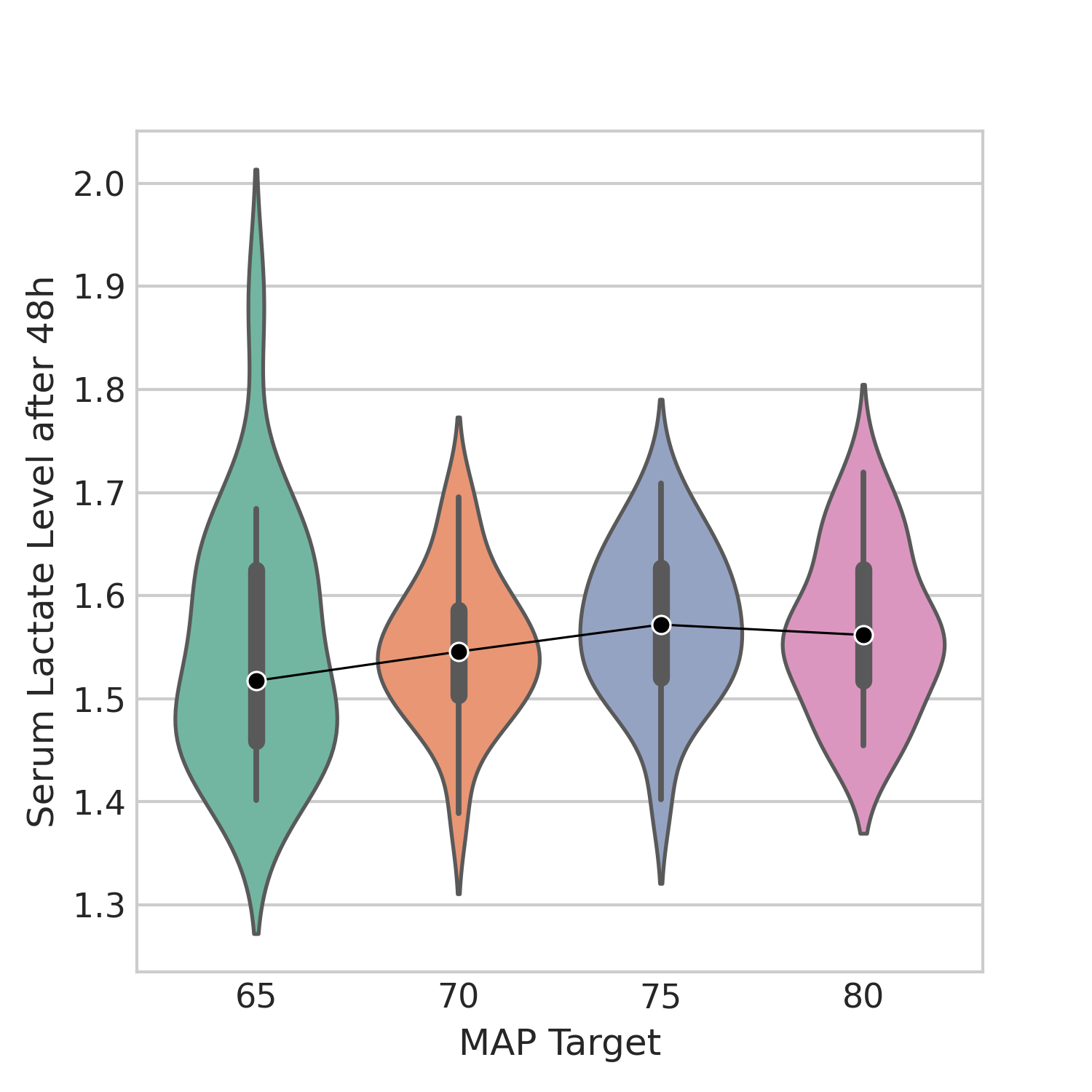}    
        \label{fig:sepsis_hypotension_lactate}
    \end{subfigure}
    \vspace{-15pt}
    \caption{Left: \emph{Runtime} under the same number of epochs. Right: \emph{Real-world case study}  --- distribution of CAPO estimates of 72-hour serum lactate across four MAP targets (65–80 mmHg), aggregated over 20 bootstrap experiments. 
    }
    \label{fig:runtime_real_world}
\end{figure}

\begin{acks}
We thank Xintong Li for her contributions to MIMIC-III data processing and the reproduction of the MIMIC-Extract data extraction pipeline, and thank Zhenxing Xu for his contribution to MIMIC-IV data processing. This work utilized computing resources provided by AWS. We would also like to acknowledge the support from NSF 2212175, NIH RF1AG072449, RF1AG084178, R01AG080991, R01AG080624, R01AG076448, R01AG076234, and R01NS140142.
\end{acks}

\bibliographystyle{ACM-Reference-Format}
\bibliography{reference}

@inproceedings{stooke2021decoupling,
  title={Decoupling representation learning from reinforcement learning},
  author={Stooke, Adam and Lee, Kimin and Abbeel, Pieter and Laskin, Michael},
  booktitle={International conference on machine learning},
  pages={9870--9879},
  year={2021},
  organization={PMLR}
}

@article{fang2023predictive,
  title={Predictive auxiliary objectives in deep rl mimic learning in the brain},
  author={Fang, Ching and Stachenfeld, Kimberly L},
  journal={arXiv preprint arXiv:2310.06089},
  year={2023}
}

@article{jaderberg2016reinforcement,
  title={Reinforcement learning with unsupervised auxiliary tasks},
  author={Jaderberg, Max and Mnih, Volodymyr and Czarnecki, Wojciech Marian and Schaul, Tom and Leibo, Joel Z and Silver, David and Kavukcuoglu, Koray},
  journal={arXiv preprint arXiv:1611.05397},
  year={2016}
}

@inproceedings{fu2019diagnosing,
  title={Diagnosing bottlenecks in deep q-learning algorithms},
  author={Fu, Justin and Kumar, Aviral and Soh, Matthew and Levine, Sergey},
  booktitle={International Conference on Machine Learning},
  pages={2021--2030},
  year={2019},
  organization={PMLR}
}

@article{petersen2014targeted,
  title={Targeted maximum likelihood estimation for dynamic and static longitudinal marginal structural working models},
  author={Petersen, Maya and Schwab, Joshua and Gruber, Susan and Blaser, Nello and Schomaker, Michael and van der Laan, Mark},
  journal={Journal of causal inference},
  volume={2},
  number={2},
  pages={147--185},
  year={2014},
  publisher={De Gruyter}
}

@article{tran2019double,
  title={Double robust efficient estimators of longitudinal treatment effects: comparative performance in simulations and a case study},
  author={Tran, Linh and Yiannoutsos, Constantin and Wools-Kaloustian, Kara and Siika, Abraham and Van Der Laan, Mark and Petersen, Maya},
  journal={The international journal of biostatistics},
  volume={15},
  number={2},
  pages={20170054},
  year={2019},
  publisher={De Gruyter}
}

@article{keil2014parametric,
  title={The parametric g-formula for time-to-event data: intuition and a worked example},
  author={Keil, Alexander P and Edwards, Jessie K and Richardson, David B and Naimi, Ashley I and Cole, Stephen R},
  journal={Epidemiology},
  volume={25},
  number={6},
  pages={889--897},
  year={2014},
  publisher={LWW}
}

@article{robins1986new,
  title={A new approach to causal inference in mortality studies with a sustained exposure period—application to control of the healthy worker survivor effect},
  author={Robins, James},
  journal={Mathematical modelling},
  volume={7},
  number={9-12},
  pages={1393--1512},
  year={1986},
  publisher={Elsevier}
}

@book{sutton1998reinforcement,
  title={Reinforcement learning: An introduction},
  author={Sutton, Richard S and Barto, Andrew G and others},
  volume={1},
  number={1},
  year={1998},
  publisher={MIT press Cambridge}
}

@article{bodory2022evaluating,
  title={Evaluating (weighted) dynamic treatment effects by double machine learning},
  author={Bodory, Hugo and Huber, Martin and Laff{\'e}rs, Luk{\'a}{\v{s}}},
  journal={The Econometrics Journal},
  volume={25},
  number={3},
  pages={628--648},
  year={2022},
  publisher={Oxford University Press}
}

@inproceedings{lewis2021double,
  title={Double/Debiased Machine Learning for Dynamic Treatment Effects.},
  author={Lewis, Greg and Syrgkanis, Vasilis},
  booktitle={NeurIPS},
  pages={22695--22707},
  year={2021}
}

@article{van2011targeted,
  title={Targeted minimum loss based estimation of causal effects of multiple time point interventions.},
  author={van der Laan, Mark J and Gruber, Susan},
  journal={International Journal of Biostatistics},
  volume={8},
  number={1},
  year={2011}
}

@article{lillicrap2015continuous,
  title={Continuous control with deep reinforcement learning},
  author={Lillicrap, Timothy P and Hunt, Jonathan J and Pritzel, Alexander and Heess, Nicolas and Erez, Tom and Tassa, Yuval and Silver, David and Wierstra, Daan},
  journal={arXiv preprint arXiv:1509.02971},
  year={2015}
}

@article{mnih2013playing,
  title={Playing atari with deep reinforcement learning},
  author={Mnih, Volodymyr and Kavukcuoglu, Koray and Silver, David and Graves, Alex and Antonoglou, Ioannis and Wierstra, Daan and Riedmiller, Martin},
  journal={arXiv preprint arXiv:1312.5602},
  year={2013}
}

@article{robins1994estimation,
  title={Estimation of regression coefficients when some regressors are not always observed},
  author={Robins, James M and Rotnitzky, Andrea and Zhao, Lue Ping},
  journal={Journal of the American statistical Association},
  volume={89},
  number={427},
  pages={846--866},
  year={1994},
  publisher={Taylor \& Francis}
}

@article{scharfstein1999adjusting,
  title={Adjusting for nonignorable drop-out using semiparametric nonresponse models},
  author={Scharfstein, Daniel O and Rotnitzky, Andrea and Robins, James M},
  journal={Journal of the American Statistical Association},
  volume={94},
  number={448},
  pages={1096--1120},
  year={1999},
  publisher={Taylor \& Francis}
}

@article{frauen2024model,
  title={Model-agnostic meta-learners for estimating heterogeneous treatment effects over time},
  author={Frauen, Dennis and Hess, Konstantin and Feuerriegel, Stefan},
  journal={arXiv preprint arXiv:2407.05287},
  year={2024}
}

@inproceedings{frauen2023estimating,
  title={Estimating average causal effects from patient trajectories},
  author={Frauen, Dennis and Hatt, Tobias and Melnychuk, Valentyn and Feuerriegel, Stefan},
  booktitle={Proceedings of the AAAI Conference on Artificial Intelligence},
  volume={37},
  number={6},
  pages={7586--7594},
  year={2023}
}

@inproceedings{shirakawa2024longitudinal,
  title={Longitudinal targeted minimum loss-based estimation with temporal-difference heterogeneous transformer},
  author={Shirakawa, Toru and Li, Yi and Wu, Yulun and Qiu, Sky and Li, Yuxuan and Zhao, Mingduo and Iso, Hiroyasu and Van Der Laan, Mark},
  booktitle={Proceedings of the 41st International Conference on Machine Learning},
  pages={45097--45113},
  year={2024}
}

@article{luedtke2017sequential,
  title={Sequential double robustness in right-censored longitudinal models},
  author={Luedtke, Alexander R and Sofrygin, Oleg and van der Laan, Mark J and Carone, Marco},
  journal={arXiv preprint arXiv:1705.02459},
  year={2017}
}

@article{diaz2023nonparametric,
  title={Nonparametric causal effects based on longitudinal modified treatment policies},
  author={D{\'\i}az, Iv{\'a}n and Williams, Nicholas and Hoffman, Katherine L and Schenck, Edward J},
  journal={Journal of the American Statistical Association},
  volume={118},
  number={542},
  pages={846--857},
  year={2023},
  publisher={Taylor \& Francis}
}

@article{evans2021surviving,
  title={Surviving sepsis campaign: international guidelines for management of sepsis and septic shock 2021},
  author={Evans, Laura and Rhodes, Andrew and Alhazzani, Waleed and Antonelli, Massimo and Coopersmith, Craig M and French, Craig and Machado, Fl{\'a}via R and Mcintyre, Lauralyn and Ostermann, Marlies and Prescott, Hallie C and others},
  journal={Critical care medicine},
  volume={49},
  number={11},
  pages={e1063--e1143},
  year={2021},
  publisher={LWW}
}

@article{asfar2014high,
  title={High versus low blood-pressure target in patients with septic shock},
  author={Asfar, Pierre and Meziani, Ferhat and Hamel, Jean-Fran{\c{c}}ois and Grelon, Fabien and Megarbane, Bruno and Anguel, Nadia and Mira, Jean-Paul and Dequin, Pierre-Fran{\c{c}}ois and Gergaud, Soizic and Weiss, Nicolas and others},
  journal={New England Journal of Medicine},
  volume={370},
  number={17},
  pages={1583--1593},
  year={2014},
  publisher={Mass Medical Soc}
}

@article{PhysioNet-mimiciii-1.4,
  author = {Johnson, Alistair and Pollard, Tom and Mark, Roger},
  title = {{MIMIC-III Clinical Database}},
  journal = {{PhysioNet}},
  year = {2016},
  month = sep,
  note = {Version 1.4},
  doi = {10.13026/C2XW26},
  url = {https://doi.org/10.13026/C2XW26}
}

@article{guo2024estimating,
  title={Estimating the treatment effect over time under general interference through deep learner integrated TMLE},
  author={Guo, Suhan and Shen, Furao and Li, Ni},
  journal={arXiv preprint arXiv:2412.04799},
  year={2024}
}

@article{rosenblum2010targeted,
  title={Targeted maximum likelihood estimation of the parameter of a marginal structural model},
  author={Rosenblum, Michael and Van Der Laan, Mark J},
  journal={The international journal of biostatistics},
  volume={6},
  number={2},
  pages={19},
  year={2010}
}

@inproceedings{cho2024kernel,
  title={Kernel Debiased Plug-in Estimation: Simultaneous, Automated Debiasing without Influence Functions for Many Target Parameters},
  author={Cho, Brian M and Mukhin, Yaroslav and Gan, Kyra and Malenica, Ivana},
  booktitle={International Conference on Machine Learning},
  pages={8534--8555},
  year={2024},
  organization={PMLR}
}

@article{stitelman2012general,
  title={A General Implementation of TMLE for Longitudinal Data Applied to Causal Inference in Survival Analysis},
  author={Stitelman, Ori M and De Gruttola, Victor and van der Laan, Mark J},
  journal={The International Journal of Biostatistics},
  volume={8},
  number={1},
  pages={1--39},
  year={2012},
  publisher={De Gruyter}
}

@incollection{robins2000marginal,
  title={Marginal structural models versus structural nested models as tools for causal inference},
  author={Robins, James M},
  booktitle={Statistical models in epidemiology, the environment, and clinical trials},
  pages={95--133},
  year={2000},
  publisher={Springer}
}

@article{bang2005doubly,
  title={Doubly robust estimation in missing data and causal inference models},
  author={Bang, Heejung and Robins, James M},
  journal={Biometrics},
  volume={61},
  number={4},
  pages={962--973},
  year={2005},
  publisher={Oxford University Press}
}

@article{snowden2011implementation,
  title={Implementation of G-computation on a simulated data set: demonstration of a causal inference technique},
  author={Snowden, Jonathan M and Rose, Sherri and Mortimer, Kathleen M},
  journal={American journal of epidemiology},
  volume={173},
  number={7},
  pages={731--738},
  year={2011},
  publisher={Oxford University Press}
}

@article{seedat2022continuous,
  title={Continuous-time modeling of counterfactual outcomes using neural controlled differential equations},
  author={Seedat, Nabeel and Imrie, Fergus and Bellot, Alexis and Qian, Zhaozhi and van der Schaar, Mihaela},
  journal={arXiv preprint arXiv:2206.08311},
  year={2022}
}

@inproceedings{bica2020estimating,
  title={Estimating counterfactual treatment outcomes over time through adversarially balanced representations},
  author={Bica, Ioana and Alaa, Ahmed M and Jordon, James and van der Schaar, Mihaela},
  booktitle={International Conference on Learning Representations},
year={2020}
}

@article{robins2008estimation,
  title={Estimation of the causal effects of time-varying exposures},
  author={Robins, James and Hernan, Miguel},
  journal={Chapman \& Hall/CRC Handbooks of Modern Statistical Methods},
  pages={553--599},
  year={2008},
  publisher={Chapman and Hall/CRC}
}

@misc{chernozhukov2018double,
  title={Double/debiased machine learning for treatment and structural parameters},
  author={Chernozhukov, Victor and Chetverikov, Denis and Demirer, Mert and Duflo, Esther and Hansen, Christian and Newey, Whitney and Robins, James},
  year={2018},
  publisher={Oxford University Press Oxford, UK}
}

@article{lim2018forecasting,
  title={Forecasting treatment responses over time using recurrent marginal structural networks},
  author={Lim, Bryan},
  journal={Advances in neural information processing systems},
  volume={31},
  year={2018}
}

@inproceedings{melnychuk2022causal,
  title={Causal transformer for estimating counterfactual outcomes},
  author={Melnychuk, Valentyn and Frauen, Dennis and Feuerriegel, Stefan},
  booktitle={International conference on machine learning},
  pages={15293--15329},
  year={2022},
  organization={PMLR}
}

@article{nguyen2010early,
  title={Early lactate clearance is associated with biomarkers of inflammation, coagulation, apoptosis, organ dysfunction and mortality in severe sepsis and septic shock},
  author={Nguyen, H Bryant and Loomba, Manisha and Yang, James J and Jacobsen, Gordon and Shah, Kant and Otero, Ronny M and Suarez, Arturo and Parekh, Hemal and Jaehne, Anja and Rivers, Emanuel P},
  journal={Journal of inflammation},
  volume={7},
  number={1},
  pages={6},
  year={2010},
  publisher={Springer}
}

@article{taubman2009intervening,
  title={Intervening on risk factors for coronary heart disease: an application of the parametric g-formula},
  author={Taubman, Sarah L and Robins, James M and Mittleman, Murray A and Hern{\'a}n, Miguel A},
  journal={International journal of epidemiology},
  volume={38},
  number={6},
  pages={1599--1611},
  year={2009},
  publisher={Oxford University Press}
}

@inproceedings{li2021g,
  title={G-net: a recurrent network approach to g-computation for counterfactual prediction under a dynamic treatment regime},
  author={Li, Rui and Hu, Stephanie and Lu, Mingyu and Utsumi, Yuria and Chakraborty, Prithwish and Sow, Daby M and Madan, Piyush and Li, Jun and Ghalwash, Mohamed and Shahn, Zach and others},
  booktitle={Machine Learning for Health},
  pages={282--299},
  year={2021},
  organization={PMLR}
}

@inproceedings{bibaut2019more,
  title={More efficient off-policy evaluation through regularized targeted learning},
  author={Bibaut, Aurelien and Malenica, Ivana and Vlassis, Nikos and Van Der Laan, Mark},
  booktitle={International Conference on Machine Learning},
  pages={654--663},
  year={2019},
  organization={PMLR}
}

@inproceedings{hess2024igc,
  title={Igc-net for conditional average potential outcome estimation over time},
  author={Hess, Konstantin and Frauen, Dennis and Melnychuk, Valentyn and Feuerriegel, Stefan},
  booktitle={The Fourteenth International Conference on Learning Representations},
  year={2024}
}

@article{hernan2006estimating,
  title={Estimating causal effects from epidemiological data},
  author={Hern{\'a}n, Miguel A and Robins, James M},
  journal={Journal of Epidemiology \& Community Health},
  volume={60},
  number={7},
  pages={578--586},
  year={2006},
  publisher={BMJ Publishing Group Ltd}
}

@article{cole2008constructing,
  title={Constructing inverse probability weights for marginal structural models},
  author={Cole, Stephen R and Hern{\'a}n, Miguel A},
  journal={American journal of epidemiology},
  volume={168},
  number={6},
  pages={656--664},
  year={2008},
  publisher={Oxford University Press}
}

@article{rosenbaum1983central,
  title={The central role of the propensity score in observational studies for causal effects},
  author={Rosenbaum, Paul R and Rubin, Donald B},
  journal={Biometrika},
  volume={70},
  number={1},
  pages={41--55},
  year={1983},
  publisher={Oxford University Press}
}

@article{rosenbaum1984reducing,
  title={Reducing bias in observational studies using subclassification on the propensity score},
  author={Rosenbaum, Paul R and Rubin, Donald B},
  journal={Journal of the American statistical Association},
  volume={79},
  number={387},
  pages={516--524},
  year={1984},
  publisher={Taylor \& Francis}
}

@article{xu2010use,
  title={Use of stabilized inverse propensity scores as weights to directly estimate relative risk and its confidence intervals},
  author={Xu, Stanley and Ross, Colleen and Raebel, Marsha A and Shetterly, Susan and Blanchette, Christopher and Smith, David},
  journal={Value in Health},
  volume={13},
  number={2},
  pages={273--277},
  year={2010},
  publisher={Wiley Online Library}
}

@article{chesnaye2022introduction,
  title={An introduction to inverse probability of treatment weighting in observational research},
  author={Chesnaye, Nicholas C and Stel, Vianda S and Tripepi, Giovanni and Dekker, Friedo W and Fu, Edouard L and Zoccali, Carmine and Jager, Kitty J},
  journal={Clinical kidney journal},
  volume={15},
  number={1},
  pages={14--20},
  year={2022},
  publisher={Oxford University Press}
}

@article{austin2011introduction,
  title={An introduction to propensity score methods for reducing the effects of confounding in observational studies},
  author={Austin, Peter C},
  journal={Multivariate behavioral research},
  volume={46},
  number={3},
  pages={399--424},
  year={2011},
  publisher={Taylor \& Francis}
}

@article{stuart2010matching,
  title={Matching methods for causal inference: A review and a look forward},
  author={Stuart, Elizabeth A},
  journal={Statistical science: a review journal of the Institute of Mathematical Statistics},
  volume={25},
  number={1},
  pages={1},
  year={2010}
}

@article{mahmood2014weighted,
  title={Weighted importance sampling for off-policy learning with linear function approximation},
  author={Mahmood, A Rupam and Van Hasselt, Hado P and Sutton, Richard S},
  journal={Advances in neural information processing systems},
  volume={27},
  year={2014}
}

@article{petersen2012diagnosing,
  title={Diagnosing and responding to violations in the positivity assumption},
  author={Petersen, Maya L and Porter, Kristin E and Gruber, Susan and Wang, Yue and Van Der Laan, Mark J},
  journal={Statistical methods in medical research},
  volume={21},
  number={1},
  pages={31--54},
  year={2012},
  publisher={SAGE Publications Sage UK: London, England}
}

@article{chiu2023evaluating,
  title={Evaluating model specification when using the parametric g-formula in the presence of censoring},
  author={Chiu, Yu-Han and Wen, Lan and McGrath, Sean and Logan, Roger and Dahabreh, Issa J and Hern{\'a}n, Miguel A},
  journal={American journal of epidemiology},
  volume={192},
  number={11},
  pages={1887--1895},
  year={2023},
  publisher={Oxford University Press}
}

@article{mcgrath2020gformula,
  title={gfoRmula: an R package for estimating the effects of sustained treatment strategies via the parametric g-formula},
  author={McGrath, Sean and Lin, Victoria and Zhang, Zilu and Petito, Lucia C and Logan, Roger W and Hern{\'a}n, Miguel A and Young, Jessica G},
  journal={Patterns},
  volume={1},
  number={3},
  year={2020},
  publisher={Elsevier}
}

@inproceedings{oprescu2025gst,
  title={GST-UNet: A Neural Framework for Spatiotemporal Causal Inference with Time-Varying Confounding},
  author={Oprescu, Miruna and Park, David Keetae and Luo, Xihaier and Yoo, Shinjae and Kallus, Nathan},
  booktitle={The Thirty-ninth Annual Conference on Neural Information Processing Systems},
  year={2025}
}

@article{liu2025bayesian,
  title={A Bayesian Approach to the G-Formula via Iterative Conditional Regression},
  author={Liu, Ruyi and Hu, Liangyuan and Perry Wilson, Francis and Warren, Joshua L and Li, Fan},
  journal={Statistics in Medicine},
  volume={44},
  number={13-14},
  pages={e70123},
  year={2025},
  publisher={Wiley Online Library}
}

@article{rein2024deep,
  title={Deep Learning Methods for the Noniterative Conditional Expectation G-Formula for Causal Inference from Complex Observational Data},
  author={Rein, Sophia M and Li, Jing and Hernan, Miguel and Beam, Andrew},
  journal={arXiv preprint arXiv:2410.21531},
  year={2024}
}

@inproceedings{thomas2016data,
  title={Data-efficient off-policy policy evaluation for reinforcement learning},
  author={Thomas, Philip and Brunskill, Emma},
  booktitle={International conference on machine learning},
  pages={2139--2148},
  year={2016},
  organization={PMLR}
}

@inproceedings{hanna2019importance,
  title={Importance sampling policy evaluation with an estimated behavior policy},
  author={Hanna, Josiah and Niekum, Scott and Stone, Peter},
  booktitle={International Conference on Machine Learning},
  pages={2605--2613},
  year={2019},
  organization={PMLR}
}

@article{schlegel2019importance,
  title={Importance resampling for off-policy prediction},
  author={Schlegel, Matthew and Chung, Wesley and Graves, Daniel and Qian, Jian and White, Martha},
  journal={Advances in Neural Information Processing Systems},
  volume={32},
  year={2019}
}

@article{liu2018breaking,
  title={Breaking the curse of horizon: Infinite-horizon off-policy estimation},
  author={Liu, Qiang and Li, Lihong and Tang, Ziyang and Zhou, Dengyong},
  journal={Advances in neural information processing systems},
  volume={31},
  year={2018}
}

@inproceedings{le2019batch,
  title={Batch policy learning under constraints},
  author={Le, Hoang and Voloshin, Cameron and Yue, Yisong},
  booktitle={International Conference on Machine Learning},
  pages={3703--3712},
  year={2019},
  organization={PMLR}
}

@inproceedings{duan2020minimax,
  title={Minimax-optimal off-policy evaluation with linear function approximation},
  author={Duan, Yaqi and Jia, Zeyu and Wang, Mengdi},
  booktitle={International Conference on Machine Learning},
  pages={2701--2709},
  year={2020},
  organization={PMLR}
}

@misc{voloshin2021empiricalstudyoffpolicypolicy,
      title={Empirical Study of Off-Policy Policy Evaluation for Reinforcement Learning}, 
      author={Cameron Voloshin and Hoang M. Le and Nan Jiang and Yisong Yue},
      year={2021},
      eprint={1911.06854},
      archivePrefix={arXiv},
      primaryClass={cs.LG},
      url={https://arxiv.org/abs/1911.06854}, 
}

@InProceedings{pmlr-v48-jiang16,
  title = 	 {Doubly Robust Off-policy Value Evaluation for Reinforcement Learning},
  author = 	 {Jiang, Nan and Li, Lihong},
  booktitle = 	 {Proceedings of The 33rd International Conference on Machine Learning},
  pages = 	 {652--661},
  year = 	 {2016},
  editor = 	 {Balcan, Maria Florina and Weinberger, Kilian Q.},
  volume = 	 {48},
  series = 	 {Proceedings of Machine Learning Research},
  address = 	 {New York, New York, USA},
  month = 	 {20--22 Jun},
  publisher =    {PMLR},
  url = 	 {https://proceedings.mlr.press/v48/jiang16.html}
}

@InProceedings{pmlr-v48-thomasa16,
  title = 	 {Data-Efficient Off-Policy Policy Evaluation for Reinforcement Learning},
  author = 	 {Thomas, Philip and Brunskill, Emma},
  booktitle = 	 {Proceedings of The 33rd International Conference on Machine Learning},
  pages = 	 {2139--2148},
  year = 	 {2016},
  editor = 	 {Balcan, Maria Florina and Weinberger, Kilian Q.},
  volume = 	 {48},
  series = 	 {Proceedings of Machine Learning Research},
  address = 	 {New York, New York, USA},
  month = 	 {20--22 Jun},
  publisher =    {PMLR},
  url = 	 {https://proceedings.mlr.press/v48/thomasa16.html}
}

@article{JMLR:v21:19-827,
  author  = {Nathan Kallus and Masatoshi Uehara},
  title   = {Double Reinforcement Learning for Efficient Off-Policy Evaluation in Markov Decision Processes},
  journal = {Journal of Machine Learning Research},
  year    = {2020},
  volume  = {21},
  number  = {167},
  pages   = {1--63},
  url     = {http://jmlr.org/papers/v21/19-827.html}
}

@article{DBLP:journals/corr/abs-1103-4601,
  author       = {Miroslav Dud{\'{\i}}k and
                  John Langford and
                  Lihong Li},
  title        = {Doubly Robust Policy Evaluation and Learning},
  journal      = {CoRR},
  volume       = {abs/1103.4601},
  year         = {2011},
  url          = {http://arxiv.org/abs/1103.4601},
  eprinttype    = {arXiv},
  eprint       = {1103.4601},
  bibsource    = {dblp computer science bibliography, https://dblp.org}
}

@inproceedings{
li2025targeted,
title={Targeted Maximum Likelihood Learning: An Optimization Perspective},
author={Diyang Li and Kyra Gan},
booktitle={The Thirty-ninth Annual Conference on Neural Information Processing Systems},
year={2025},
url={https://openreview.net/forum?id=n63KgrgVHG}
}

@inproceedings{Wang_2020, series={ACM CHIL ’20},
   title={MIMIC-Extract: a data extraction, preprocessing, and representation pipeline for MIMIC-III},
   url={http://dx.doi.org/10.1145/3368555.3384469},
   DOI={10.1145/3368555.3384469},
   booktitle={Proceedings of the ACM Conference on Health, Inference, and Learning},
   publisher={ACM},
   author={Wang, Shirly and McDermott, Matthew B. A. and Chauhan, Geeticka and Ghassemi, Marzyeh and Hughes, Michael C. and Naumann, Tristan},
   year={2020},
   month=apr, pages={222–235},
   collection={ACM CHIL ’20} }

@article{lendle2017ltmle,
  title={ltmle: an R package implementing targeted minimum loss-based estimation for longitudinal data},
  author={Lendle, Samuel D and Schwab, Joshua and Petersen, Maya L and van der Laan, Mark J},
  journal={Journal of Statistical Software},
  volume={81},
  pages={1--21},
  year={2017}
}

\appendix
\setcounter{figure}{0}
\renewcommand{\thefigure}{S\arabic{figure}}
\setcounter{table}{0}
\renewcommand{\thetable}{S\arabic{table}}

\section{Proof}\label{appx:first_vs_second_order}
\LemmaFirstSecondOrder*

\begin{proof}
Consider a CF sequence $\ba=(a_1,\dots,a_\tau)$.
Recall the target regression at time $t$:
$
Q_t^0(A_t,H_t):=\mathbb{E}\!\left[\,Q_{t+1}^0(a_{t+1},H_{t+1})\mid A_t,H_t\,\right].
$
Throughout, positivity means $G_k^0(H_k),\hat G_k^{\mathrm{SDR}}(H_k)\in[\delta,1-\delta]$ a.s.\ for all $k\in[\tau]$, so all density ratios below are uniformly bounded.

\paragraph{Step 1: a generic decomposition.}
For any training target $T_{t+1}$ (measurable w.r.t.\ $(H_{t+1},A_{t+1})$), define its conditional mean:
$
m_t^{T}(A_t,H_t):=\mathbb{E}[T_{t+1}\mid A_t, H_t].
$
Then by the triangle inequality,
\begin{equation}\label{eq:decomp_generic}
\|\hat Q_t-Q_t^0\|_2
\le
\underbrace{\|\hat Q_t-m_t^{T}\|_2}_{:=\xi_t}
+
\|m_t^{T}-Q_t^0\|_2.
\end{equation}
We apply~\eqref{eq:decomp_generic} to the ICE target and the SDR target separately.

\paragraph{Step 2: ICE target yields a first-order bias term.}
For ICE, the training target is
$
T^{\mathrm{ICE}}_{t+1}:=\hat Q_{t+1}^{\mathrm{ICE}}(a_{t+1},H_{t+1}).
$
Hence,
$
m_t^{\mathrm{ICE}}-Q_t^0
=
\mathbb{E}\!\left[\hat Q_{t+1}^{\mathrm{ICE}}(a_{t+1},H_{t+1})-Q_{t+1}^0(a_{t+1},H_{t+1})\mid A_t,H_t\right].
$
By Jensen (equivalently, $\|\mathbb{E}[U\mid A_t,H_t]\|_2\le \|U\|_2$),
\[
\|m_t^{\mathrm{ICE}}-Q_t^0\|_2
\le
\|\hat Q_{t+1}^{\mathrm{ICE}}(a_{t+1},H_{t+1})-Q_{t+1}^0(a_{t+1},H_{t+1})\|_2.
\]
Combining this with~\eqref{eq:decomp_generic} proves the ICE bound.

\paragraph{Step 3: SDR target yields a second-order bias term.}
For SDR, the training target is
$
T^{\mathrm{SDR}}_{t+1}:=Q^\dagger_{t+1}(a_{t+1},H_{t+1}),
$
where $Q^\dagger$ is constructed via~\eqref{eq:sdr_target} using nuisance estimates
$(\hat Q^{\mathrm{SDR}},\hat G^{\mathrm{SDR}})$.
Let
$
m_t^{\mathrm{SDR}}(A_t,H_t):=\mathbb{E}[Q^\dagger_{t+1}(a_{t+1},H_{t+1})\mid A_t,H_t].
$
We will bound $\|m_t^{\mathrm{SDR}}-Q_t^0\|_2$.

Define the true and estimated density ratios for the sequence $\ba$:
\begin{align*}
w_k^0(A_k,H_k)
&:=
\frac{\mathds{1}(A_k=a_k)}{A_k G_k^0(H_k)+(1-A_k)(1-G_k^0(H_k))},\\
\hat w_k(A_k,H_k)
&:=
\frac{\mathds{1}(A_k=a_k)}{A_k \hat G_k^{\mathrm{SDR}}(H_k)+(1-A_k)(1-\hat G_k^{\mathrm{SDR}}(H_k))}.
\end{align*}

Under positivity, $\|\hat w_k\|_\infty\vee\| w_k^0\|_\infty\le \delta^{-1}$ and
\begin{equation}\label{eq:eta_lip}
\|\hat w_k-w_k^0\|_2 \lesssim \|\hat G_k^{\mathrm{SDR}}-G_k^0\|_2.
\end{equation}

Now consider the difference between the SDR pseudo-outcome and its population counterpart. Lemma 3 in ~\citet{diaz2023nonparametric} implies that, for each fixed $t$,
\begin{equation}\label{eq:sdr_bias_key}
m_t^{\mathrm{SDR}}-Q_t^0
=
\sum_{k=t+1}^{\tau}
\mathbb{E}\!\left[
W_{t,k}\,(\hat w_k-w_k^0)\,\bigl(\hat Q_k^{\mathrm{SDR}}-Q_k^\dagger\bigr)
\ \middle|\ A_t,H_t
\right],
\end{equation}
where $W_{t,k}$ is a product of (true and/or estimated) weight factors over times $t+1,\dots,k-1$.
By positivity, $\|W_{t,k}\|_\infty \le C'_{t,k}$ for constants $C'_{t,k}$ depending only on $(\delta,\tau)$.

Apply Jensen and Cauchy--Schwarz to~\eqref{eq:sdr_bias_key}:
\begin{align*}
\|m_t^{\mathrm{SDR}}-Q_t^0\|_2
&\le
\sum_{k=t+1}^{\tau}
\left\|
E\!\left[
W_{t,k}\,(\hat w_k- w_k^0)\,(\hat Q_k^{\mathrm{SDR}}-Q_k^\dagger)
\ \middle|\ A_t,H_t
\right]
\right\|_2 \\
&\le
\sum_{k=t+1}^{\tau}
\|W_{t,k}\|_\infty\,
\|(\hat w_k- w_k^0)\,(\hat Q_k^{\mathrm{SDR}}-Q_k^\dagger)\|_2 \\
&\le
\sum_{k=t+1}^{\tau}
C'_{t,k}\,\|\hat w_k- w_k^0\|_2\,\|\hat Q_k^{\mathrm{SDR}}-Q_k^\dagger\|_2.
\end{align*}
Invoking~\eqref{eq:eta_lip} yields
\[
\|m_t^{\mathrm{SDR}}-Q_t^0\|_2
\le
\sum_{k=t+1}^{\tau}
C_{t,k}\,
\|\hat G_k^{\mathrm{SDR}}-G_k^0\|_2\,\|\hat Q_k^{\mathrm{SDR}}-Q_k^\dagger\|_2,
\]
for constants $C_{t,k}$ depending only on $(\delta,\tau)$.
Finally, combine this bound with the generic decomposition~\eqref{eq:decomp_generic} (with $T=T^{\mathrm{SDR}}$),
where $\xi_t=\|\hat Q_t^{\mathrm{SDR}}-m_t^{\mathrm{SDR}}\|_2$, to obtain the SDR inequality in the theorem.
\end{proof}

\section{LTMLE}\label{appx:ltmle}
We adopted the one-step LTMLE algorithm~\cite{van2011targeted} and added an L1 regularizer for the targeting update~\cite{bibaut2019more}, as detailed in Algorithm~\ref{algo:ltmle}. 
\begin{algorithm}[t]
\caption{LTMLE}
\begin{algorithmic}[1]\label{algo:ltmle}
    \STATE \textbf{Input}: Dataset $\{O_i\}_{i=1}^n$; Initial estimates of $Q_t$ and $G_t$ for $t = 1, \dots, \tau$; L1 regularization coefficient: $\lambda$.
    \STATE \textbf{Output}: CAPO estimate $\hat{\psi}_n = \mathbb{P}_nQ_{1,\epsilon_1}(a_1, H_1)$
    \STATE Initialize $Q_{\tau+1} = Y$, $\epsilon_{\tau+1}=0$
    \STATE Initialize submodel $\;\;\mathrm{logit} Q_{t,\epsilon_t} = \mathrm{logit} Q_t + \epsilon_t$, and loss function 
    $$L_t(\epsilon_t) = \left(\prod_{s=1}^t\hat{w}_s(A_s,H_s)\right) \mathcal{L}_{bce}(Q_{t,\epsilon_t}(A_t, H_t), Q_{t+1,\epsilon_{t+1}}(a_{t+1},H_{t+1})).$$
    \FOR{$t=\tau$ to $1$}
        \STATE $\epsilon_t \leftarrow \underset{\epsilon_t}{\arg\min} \mathbb{P}_n L_t(\epsilon_t) + \lambda|\epsilon_t|$
    \ENDFOR
\end{algorithmic}
\end{algorithm}

\begin{table*}[t]
\centering
\caption{Absolute bias (mean $\pm$ std) and RMSE of CAPO estimates at horizon $\tau = 10$.}
\vspace{-10pt}
\label{tab:results-tau10}
\resizebox{\linewidth}{!}{%
\begin{tabular}{l
 cccc cccc
 cccc cccc}
\toprule
& \multicolumn{8}{c}{\textbf{Limited Time-varying Confounding}} 
& \multicolumn{8}{c}{\textbf{Expanded Time-varying Confounding}} \\
\cmidrule(lr){2-9} \cmidrule(lr){10-17}

\textbf{Model}
& \multicolumn{4}{c}{\textbf{Bias}}
& \multicolumn{4}{c}{\textbf{RMSE}}
& \multicolumn{4}{c}{\textbf{Bias}}
& \multicolumn{4}{c}{\textbf{RMSE}} \\

& CF 1 & CF 2 & CF 3 & CF 4
& CF 1 & CF 2 & CF 3 & CF 4
& CF 1 & CF 2 & CF 3 & CF 4
& CF 1 & CF 2 & CF 3 & CF 4 \\
\midrule

\rowcolor{gray!6}
G-comp (sl.)
& 3.03$\pm$1.19 & 3.81$\pm$1.52 & 2.81$\pm$1.47 & 4.33$\pm$1.39
& 3.25 & 4.09 & 3.16 & 4.53
& 1.93$\pm$0.80 & 3.34$\pm$0.92 & 1.82$\pm$0.48 & 3.10$\pm$1.21
& 2.08 & 3.46 & 1.88 & 3.32 \\

LTM. (sl.)
& 2.80$\pm$0.89 & 0.63$\pm$0.41 & \textbf{0.26$\pm$0.30} & 3.65$\pm$1.39
& 2.93 & 0.75 & 0.39 & 3.89
& 2.01$\pm$0.71 & 0.58$\pm$0.67 & 0.21$\pm$0.19 & 3.41$\pm$0.87
& 2.14 & 0.87 & 0.28 & 3.51 \\

\rowcolor{gray!6}
D.ACE
& 0.93$\pm$0.16 & 0.52$\pm$0.40 & 0.84$\pm$0.52 & 0.78$\pm$0.26
& 0.95 & 0.65 & 0.98 & 0.82
& 0.98$\pm$0.11 & 0.69$\pm$0.37 & 0.78$\pm$0.46 & 0.87$\pm$0.11
& 0.99 & 0.77 & 0.90 & 0.88 \\

D.LTMLE
& 0.72$\pm$0.56 & 0.59$\pm$0.49 & 0.61$\pm$0.34 & 1.35$\pm$1.11
& 0.91 & 0.76 & 0.70 & 1.73
& 0.51$\pm$0.45 & 0.37$\pm$0.42 & \underline{0.41$\pm$0.26} & 0.78$\pm$0.52
& 0.68 & 0.55 & 0.48 & 0.93 \\

\rowcolor{gray!6}
IGC-Net
& \underline{0.19$\pm$0.14} & \textbf{0.14$\pm$0.10} & \underline{0.36$\pm$0.19} & \underline{0.55$\pm$0.13}
& \underline{0.23} & \textbf{0.17} & \textbf{0.41} & \underline{0.56}
& \underline{0.19$\pm$0.14} & \underline{0.19$\pm$0.09} & \textbf{0.36$\pm$0.21} & \textbf{0.48$\pm$0.09}
& \underline{0.24} & \textbf{0.21} & \textbf{0.42} & \textbf{0.49} \\

$D^3$-Net
& \textbf{0.11$\pm$0.12} & \underline{0.19$\pm$0.27} & 0.51$\pm$0.10 & \textbf{0.39$\pm$0.21}
& \textbf{0.16} & \underline{0.32} & \underline{0.52} & \textbf{0.45}
& \textbf{0.14$\pm$0.09} & \underline{0.16$\pm$0.17} & 0.56$\pm$0.10 & \underline{0.61$\pm$0.32}
& \textbf{0.16} & \underline{0.23} & \underline{0.57} & \underline{0.68} \\

\bottomrule
\end{tabular}
}
\vspace{-0.5\baselineskip}
\end{table*}

\begin{table*}[t]
\centering
\caption{Absolute bias and RMSE of CAPO estimates at $\tau = 15$ under expanded confounding with additional covariates.}
\vspace{-10pt}
\label{tab:results-tau15-numz}
\resizebox{\linewidth}{!}{%
\begin{tabular}{l
 cccc cccc
 cccc cccc}
\toprule
& \multicolumn{8}{c}{\textbf{Additional Covariates: numz = 10}} 
& \multicolumn{8}{c}{\textbf{Additional Covariates: numz = 15}} \\
\cmidrule(lr){2-9} \cmidrule(lr){10-17}

\textbf{Model}
& \multicolumn{4}{c}{\textbf{Bias}}
& \multicolumn{4}{c}{\textbf{RMSE}}
& \multicolumn{4}{c}{\textbf{Bias}}
& \multicolumn{4}{c}{\textbf{RMSE}} \\

& CF 1 & CF 2 & CF 3 & CF 4
& CF 1 & CF 2 & CF 3 & CF 4
& CF 1 & CF 2 & CF 3 & CF 4
& CF 1 & CF 2 & CF 3 & CF 4 \\
\midrule

\rowcolor{gray!6}
D.ACE
& 0.91$\pm$0.16 & 0.40$\pm$0.09 & 0.69$\pm$0.24 & 0.54$\pm$0.34
& 0.93 & 0.40 & 0.73 & 0.63
& 0.96$\pm$0.18 & 0.36$\pm$0.09 & 0.74$\pm$0.27 & \underline{0.29$\pm$0.20}
& 0.97 & 0.37 & 0.78 & \underline{0.35} \\

D.LTMLE
& 0.75$\pm$0.88 & 1.00$\pm$0.95 & 0.55$\pm$0.24 & 0.43$\pm$0.43
& 1.14 & 1.36 & 0.60 & 0.60
& 1.03$\pm$1.04 & 0.92$\pm$0.71 & 0.55$\pm$0.34 & 0.32$\pm$0.26
& 1.45 & 1.15 & 0.64 & 0.40 \\

\rowcolor{gray!6}
IGC-Net
& \underline{0.50$\pm$0.22} & \textbf{0.15$\pm$0.11} & \underline{0.43$\pm$0.31} & \underline{0.39$\pm$0.18}
& \underline{0.55} & \textbf{0.18} & \underline{0.53} & \underline{0.43}
& \underline{0.47$\pm$0.33} & \textbf{0.21$\pm$0.15} & \underline{0.33$\pm$0.23} & 0.38$\pm$0.26
& \underline{0.57} & \textbf{0.25} & \underline{0.40} & 0.46 \\

$D^3$-Net
& \textbf{0.13$\pm$0.13} & \underline{0.19$\pm$0.12} & \textbf{0.30$\pm$0.15} & \textbf{0.14$\pm$0.10}
& \textbf{0.18} & \underline{0.22} & \textbf{0.33} & \textbf{0.17}
& \textbf{0.11$\pm$0.08} & \underline{0.23$\pm$0.22} & \textbf{0.32$\pm$0.18} & \textbf{0.18$\pm$0.11}
& \textbf{0.13} & \underline{0.32} & \textbf{0.37} & \textbf{0.21} \\

\bottomrule
\end{tabular}
}
\vspace{-0.5\baselineskip}
\end{table*}

\section{Implementation and Hyperparameter}
\label{appx:implementation}

Super-Learner-based G-computation and LTMLE are implemented using the \texttt{ltmle} R package, which constructs an optimally weighted ensemble of base learners via 3-fold cross-validation. Candidate learners include a generalized linear model, random forest, XGBoost, and a generalized additive model with regression splines. DeepACE and IGC-Net are implemented using the authors' original codebases~\cite{frauen2023estimating,hess2024igc}, while DeepLTMLE is implemented following the original paper. All deep learning models are trained for 100 epochs. For $D^3$-Net, the Polyak averaging coefficient is fixed at $\beta=0.02$.

Hyperparameters are tuned at $\tau=15$ using 800 trajectories for training and 200 for validation. Models are retrained on all 1000 trajectories using the selected hyperparameters. For $\tau=10$ and $\tau=20$, we reuse the hyperparameters selected at $\tau=15$. We perform random search and select hyperparameters by minimizing the factual loss
$
L_{mse}(Q(A_\tau,H_\tau),Y)+\sum_{t=1}^{\tau}L_{bce}(G(H_t),A_t).
$
For all deep learning methods, we tune the batch size $\{128,256\}$, learning rate $\{5\times10^{-4},10^{-3},5\times10^{-3}\}$, hidden size $\{8,16,32\}$, and dropout rate ${0,0.1}$. Transformer-based models (DeepLTMLE, IGC-Net, and $D^3$-Net) additionally tune the number of layers $\{1,2,3\}$ and attention heads $\{2,4\}$. DeepLTMLE and $D^3$-Net further tune $\alpha\in\{0.05,0.1\}$, while IGC-Net additionally tunes the hidden representation size $\{8,16,32\}$.

\section{Semi-Synthetic Data DGP}\label{appx:dgp}
\paragraph{DGP with Limited Time-varying Confounding}
We adopt the semi-synthetic DGP from DeepACE. Using the preprocessing pipeline of ~\citet{Wang_2020}, we extract 10 time-varying covariates $X_t \in \mathbb{R}^{10}$ over $\tau = 15$ time steps. These covariates are treated as observed patient states and are directly taken from real-world measurements.

Given the observed history, binary treatments $A_t \in \{0,1\}$ are simulated according to a stochastic, dynamic treatment policy. The treatment assignment probability at time $t$ depends on past covariates, past intermediate outcomes, and a treatment intensity variable $\ell_t = \ell_{t-1} + 2(A_t - 1)\bar{X}_t \tanh(Y_t)$, with initialization $\ell_0 = T/2 - 3$. Treatment is assigned as $\pi_t = $
\begin{align*}
\mathds{1}\Bigg\{\sigma\Big( \sum_{i=1}^{h} \frac{(-1)^i}{1 - i}\left(\bar{X}_{t-i} + \frac{\tanh(Y_{t-i})}{2}\right)
- \tanh\!\left(\ell_{t-1} - \frac{T}{2}\right) + \epsilon_t^A \Big) > 0.5 \Bigg\},
\end{align*}
and outcomes are generated according to
\begin{equation*}
Y_{t+1}
= 5 \sum_{i=1}^{h} \frac{(-1)^i}{1 - i}
\tanh\!\left(
\sin(\bar{X}_{1:5,t-i} A_{t-i})
+ \cos(\bar{X}_{5:10,t-i} A_{t-i})
\right)
+ \epsilon_t^Y,
\end{equation*}
where $\bar{X}_{1:5,t-i}$ represent the mean of the first five covariates at step $t-i$, similarly for $\bar{X}_{5:10,t-i}$. The time lag $h=8$. Noise terms satisfy $\epsilon_t^A, \epsilon_t^Y \sim \mathcal{N}(0,0.5^2)$. 
Since our target estimand is the terminal outcome $Y_{\tau}$, intermediate outcomes $Y_t$ for $t<\tau$ are absorbed into $L_{t+1}$, yielding $L_t = (X_t, Y_{t-1})^\top \in \mathbb{R}^{11}$.

\paragraph{DGP with Expanded Time-varying Confounding\;}
In the simple DGP above, patient covariates are unaffected by treatment. To introduce a stronger and more realistic treatment–confounder feedback loop, we further synthesize treatment-dependent time-varying covariates.
Specifically, we generate latent covariates $Z_t \in \mathbb{R}^{d_z}$ with initialization $Z_1 \sim \mathcal{N}(0, I)$ and dynamics
\begin{equation*}
Z_{t+1}
= \omega_1 \cdot Z_t
+ \omega_2 \cdot A_t  \sigma(Z_t^2)
+ \omega_3 \cdot 0.25 \tanh(\bar{X}_t)
+ \epsilon_t^Z,
\end{equation*}
where $\epsilon_t^Z \sim \mathcal{N}(0,0.3^2)$ and the coefficients $\omega = (\omega_1,\omega_2,\omega_3)$ are randomly drawn and fixed to $(0.37,0.42,0.29)$. These synthesized covariates both affect and are affected by treatment. 

Unless otherwise specified, we set ($d_z=5$). To evaluate robustness in higher-dimensional settings, we additionally consider ($d_z=10$) and ($d_z=15$), generated using the same structural equations. We then concatenate $Z_t$ with the observed covariates to form the augmented state
$
X_t^\dagger = (X_t, Z_t)^\top \in \mathbb{R}^{10 + d_z}.
$. Finally, treatment assignment and outcome generation follow the same structural equations as in the Limited Confounding DGP, with $X_t$ replaced by $X_t^\dagger$. Noise distributions remain unchanged.

\end{document}